\newcommand{\R}{{\mathbb R}}
\newcommandtwoopt{\st}[2][t][]{{s_{#1}^{#2}}}
\newcommandtwoopt{\ac}[2][t][]{{a_{#1}^{#2}}}
\theoremstyle{plain}
\newtheorem{theorem}{Theorem}[section]
\newtheorem{lemma}[theorem]{Lemma}
\theoremstyle{definition}
\newtheorem{definition}[theorem]{Definition}
\theoremstyle{remark}
\newtheorem{remark}[theorem]{Remark}
\newcommand\blfootnote[1]{%
  \begingroup
  \renewcommand\thefootnote{}\footnote{#1}%
  \addtocounter{footnote}{-1}%
  \endgroup
}
\definecolor{darkgreen}{rgb}{0.0, 0.8, 0.0}
\begin{document}

\twocolumn[
\icmltitle{Variance-reduced Zeroth-Order Methods for Fine-Tuning Language Models
}



\begin{icmlauthorlist}
\icmlauthor{Tanmay Gautam\textsuperscript{*}}{ucb}
\icmlauthor{Youngsuk Park\textsuperscript{\dag}}{air}
\icmlauthor{Hao Zhou}{ailabs}
\icmlauthor{Parameswaran Raman}{air}
\icmlauthor{Wooseok Ha}{ailabs}
\end{icmlauthorlist}

\icmlaffiliation{ailabs}{Amazon AI Labs, Santa Clara, USA}
\icmlaffiliation{air}{Amazon AI Research \& Education, Santa Clara, USA}
\icmlaffiliation{ucb}{University of California, Berkeley, USA}

\icmlcorrespondingauthor{Youngsuk Park\textsuperscript{\dag}}{pyoungsu@amazon.com}

\icmlkeywords{Zeroth-Order Optimization, Language Models, Fine-tuning, Variance Reduction}

\vskip 0.3in
]



\printAffiliationsAndNotice{}  

\begin{abstract}
Fine-tuning language models (LMs) has demonstrated success in a wide array of downstream tasks. However, as LMs are scaled up, the memory requirements for backpropagation become prohibitively high. Zeroth-order (ZO) optimization methods can leverage memory-efficient forward passes to estimate gradients. Recently, MeZO, an adaptation of ZO-SGD, has been shown to consistently outperform zero-shot and in-context learning when combined with suitable task prompts. In this work, we couple ZO methods with variance reduction techniques to enhance stability and convergence for inference-based LM fine-tuning. We introduce Memory-Efficient Zeroth-Order Stochastic Variance-Reduced Gradient (MeZO-SVRG) and demonstrate its efficacy across multiple LM fine-tuning tasks, eliminating the reliance on task-specific prompts. Evaluated across a range of both masked and autoregressive LMs (up to 7B parameters) on benchmark downstream tasks, MeZO-SVRG outperforms MeZO with up to 20\% increase in test accuracies in both full- and partial-parameter fine-tuning settings. MeZO-SVRG benefits from reduced computation time, often surpassing MeZO's peak test accuracy with a $2\times$ reduction in GPU-hours. MeZO-SVRG substantially decreases the memory requirement (by at least $2\times$ for autoregressive models), achieving greater memory savings as both the batch size and context lengths increase, in comparison to first-order methods.

\end{abstract}

\section{Introduction}\label{section:intro}
\begin{figure*}[htbp]
    \centering
    \begin{minipage}{0.65\columnwidth}
    \centering
        \includegraphics[width=\textwidth]{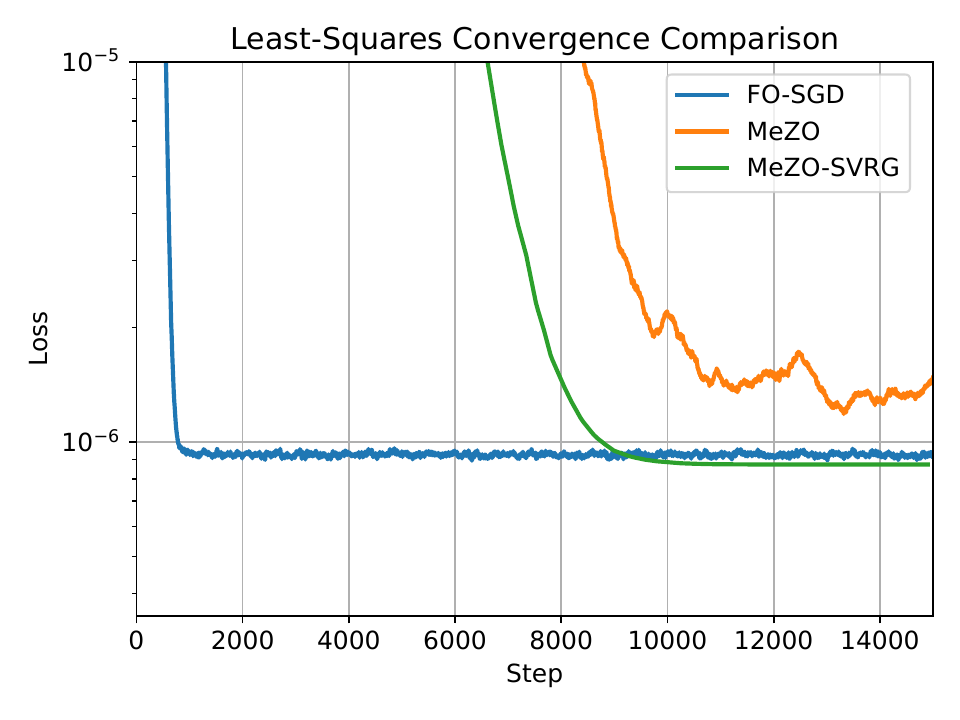}
        \subcaption{}
        \label{fig:leastsquares}
    \end{minipage}
    \begin{minipage}{0.65\columnwidth}
    \centering
        \includegraphics[width=\linewidth]{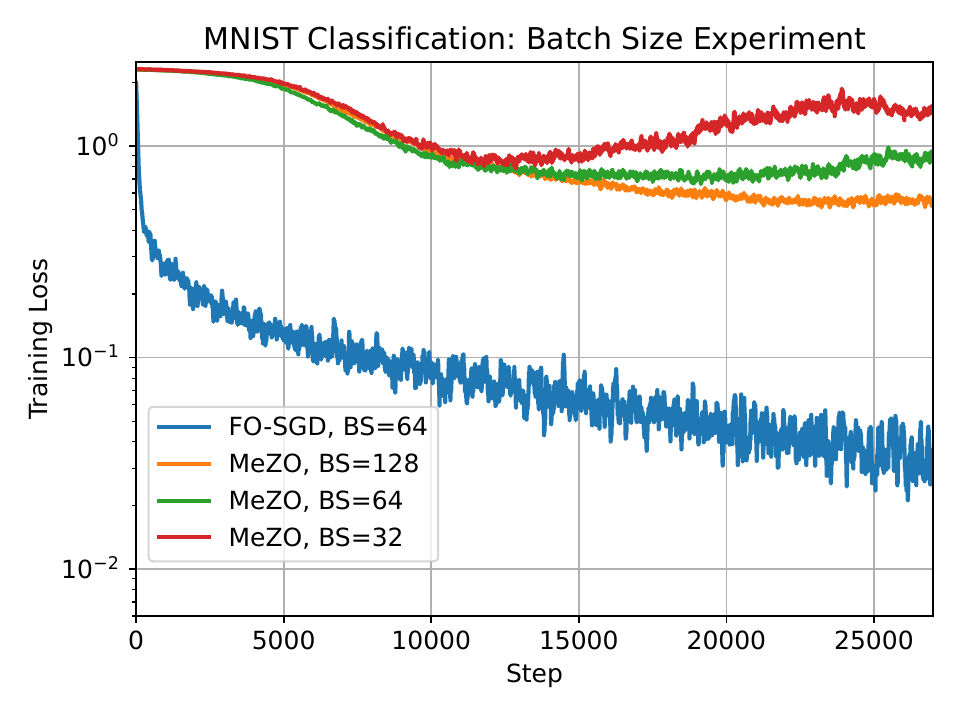}
        \subcaption{}
        \label{fig:mnistbatch}
    \end{minipage}
    \begin{minipage}{0.65\columnwidth}
    \centering
        \includegraphics[width=\linewidth]{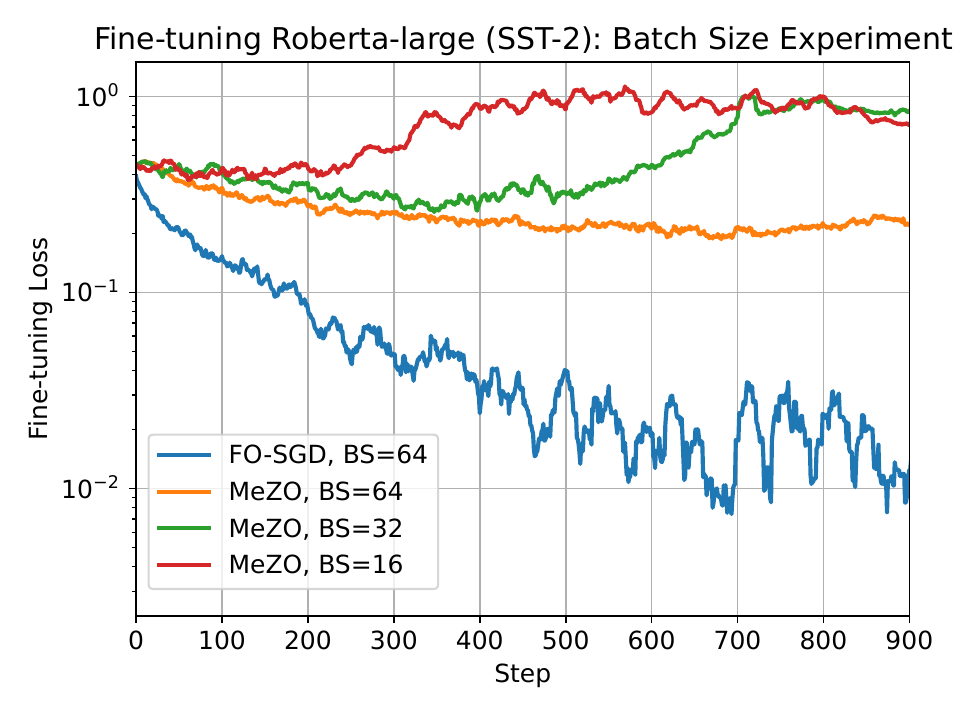}
        \subcaption{}
        \label{fig:robertabatch}
    \end{minipage}
    \caption{(a) Shows that MeZO \citep{malladi2023mezo} is unable to attain the optimal value when solving least-squares (LS) problems unlike our proposed MeZO-SVRG. In (b) and (c), MeZO  is used for MNIST \citep{lecun1998gradient} classification and fine-tuning RoBERTa-large on SST-2 \cite{socher-etal-2013-recursivesst2}, respectively, with varying batch sizes. These illustrate MeZO's instability w.r.t. smaller batch sizes. 
    }
\end{figure*}

In recent years, language models (LMs) have exhibited exceptional performance in a vast array of domains within natural language processing (NLP) \citep{solaiman2019release, openai2023gpt4, touvron2023llama}.
This development has generated immense excitement within the research community and has propelled the advancement of the aforementioned models to the forefront of deep learning research.

\blfootnote{\textsuperscript{*}Work conducted during an internship at Amazon. }


Fine-tuning LMs has been the dominant strategy for adapting pre-trained models to specialized downstream tasks \citep{gururangan-etal-2020-dont}.
Fine-tuning often relies on first-order methods, such as stochastic gradient descent (SGD) \citep{sgd} or Adam \citep{Kingma2015AdamAM}. However, as LMs are scaled up, backpropagation \citep{Rumelhart1986LearningRB} becomes prohibitive in terms of memory requirements. 
More concretely, \citet{malladi2023mezo} show that fine-tuning an OPT-13B model with full-parameter or parameter efficient fine-tuning (PEFT) using Adam requires 12$\times$ and 6$\times$ more memory than inference, respectively. This is due to the need to cache activations during the forward pass as well as gradients and optimizer states during the backward pass.
This has given rise to memory-efficient inference-based adaptation methods, including in-context learning (ICL) and zeroth-order (ZO) optimization. 

While ZO methods have been studied for decades \cite{spsa, zosgd}, it is only recently that these have been applied to fine-tune LMs \citep{malladi2023mezo}. In \citet{malladi2023mezo}, authors propose the Memory-Efficient Zeroth-Order Optimizer (MeZO) and demonstrate its superior performance against ICL with a memory footprint equivalent to that of inference. By virtue of estimating gradients through loss computations, ZO methods are compatible with settings where gradients are non-accessible or infeasible to compute, e.g. when considering non-differentiable objectives or black-box access of LMs.

However, ZO methods still face challenges in large-scale settings. According to \citet{malladi2023mezo}, MeZO requires a high number of iterations to achieve a good fine-tuning performance and works only in settings where the optimization trajectory is sufficiently well-behaved, i.e. when fine-tuning is coupled with appropriately crafted task prompts.
As such, we revisit ZO optimization under the standard (non-prompted) fine-tuning setting. Through empirical studies, we probed further and identified that the method also contends with i) instability for smaller batch sizes, and ii) a notable convergence gap to first-order (FO) fine-tuning methods in non-prompted settings (see Figures \ref{fig:leastsquares}, \ref{fig:mnistbatch}, \ref{fig:robertabatch}).


In this work, we demonstrate that variance-reduction enhances the stability and convergence properties of ZO methods in the large-scale LM fine-tuning setting. Based on our observation that ZO methods benefit from improved stability with larger batch sizes, we propose the Memory Efficient Zeroth-Order Stochastic Variance-Reduced Gradient (MeZO-SVRG) method: a ZO algorithm that combines fullbatch and minibatch information to yield asymptotically unbiased, low-variance gradient estimators. Our specific contributions are enumerated below.

\begin{enumerate}
    \item We perform empirical studies across a range of problem scales to investigate the potential limitations of MeZO. We identified its susceptibility to unstable behavior for smaller batch sizes and convergence issues in spurious optimization landscapes as improvement avenues.
    \item We propose MeZO-SVRG: an efficient variant of the ZO-SVRG method that leverages gradient estimators computed with single perturbation vectors to exploit data parallelism for speed and uses in-place operations to achieve a minimal memory footprint. 
    \item We fine-tune masked and autoregressive LMs (model scales up to 7B) on GLUE  \citep{wang-etal-2018-glue} and SuperGLUE \citep{superglue} tasks. MeZO-SVRG achieves consistent performance improvements with up to 20\% increase in test accuracies over MeZO across all models and tasks. MeZO-SVRG achieves superior performance to MeZO in both full- and partial-parameter fine-tuning, in both full (FP32) and half (BF16) precision and under standard non-prompt settings.
    \item MeZO-SVRG stands out by consistently surpassing MeZO's test accuracy in only half as many GPU-hours.
    \item We show that MeZO-SVRG significantly reduces the required memory footprint compared to first-order methods, i.e. by at least $2\times$ for considered autoregressive models. Furthermore, our experiments highlight that MeZO-SVRG's memory savings progressively improve compared to SGD with larger batch sizes.
    \item We establish convergence guarantees for MeZO-SVRG when equipped with gradient estimators that are computed using single perturbation vectors. 
\end{enumerate}

\section{Background}\label{section:background}
\subsection{Zeroth-Order Gradient Estimators} 
Consider solving the unconstrained optimization
\begin{align}\label{eq:empiricalriskmin}
 \min_{\boldsymbol{\theta}\in\R^d} f(\boldsymbol{\theta}):=\frac{1}{n}\sum_{i=1}^n f_i(\boldsymbol{\theta}),    
\end{align}
where $f:\R^d\rightarrow \R$ is a non-convex objective. Note that \eqref{eq:empiricalriskmin} is akin to the standard empirical risk minimization framework, where each $f_i$ is the objective evaluated for one of $n$ training samples. For an iterative ZO algorithm, we need to find a means to approximate the gradient. We can define the following stochastic perturbation simultaneous approximation (SPSA) gradient estimator \citep{spsa}:
\begin{align}
    \hat{\nabla} f_i(\boldsymbol{\theta}):=&\frac{f_i(\boldsymbol{\theta} + \mu \textbf{z}_i) - f_i(\boldsymbol{\theta} - \mu \textbf{z}_i)}{2\mu}\textbf{z}_i \textrm{ for } i\in[n],
\end{align}
where $\hat{\nabla}$ denotes a gradient estimator, $\textbf{z}_i\in\R^d$ is a random vector sampled from a standard normal distribution, and $\mu>0$ is a perturbation scalar. The extension $p$-SPSA computes the average of $p$ distinct SPSA estimates. 
Throughout this work, we consider the default setting of $p=1$ as we didn't observe empirical benefits of setting $p>1$. The SPSA gradient estimate is an asymptotically unbiased estimator of the true gradient as $\mu\to 0$ when each component in $\textbf{z}_i$ is mutually independent and zero-mean \citep{spsa}.

Now suppose we have a minibatch $\mathcal{I}\subset [n]$ of size $b$. 
This allows us to define the following:
\begin{align}\label{eq:minibatchspsa1}
    \hat{\nabla} f_{\mathcal{I}}(\boldsymbol{\theta}):=\frac{1}{b}\sum_{i\in \mathcal{I}} \hat{\nabla} f_i(\boldsymbol{\theta}),
\end{align}
and by extension,
\begin{align}\label{eq:fullbatchspsa1}
    \hat{\nabla} f(\boldsymbol{\theta}):= \hat{\nabla} f_{[n]}(\boldsymbol{\theta}).
\end{align}    
Observe that the gradient estimator in \eqref{eq:minibatchspsa1}  requires $2b$ function queries and sampling $b$ random vectors. In practice, there are two strategies to compute estimators \eqref{eq:minibatchspsa1} and \eqref{eq:fullbatchspsa1}: accumulate the minibatch estimator in-place by sequentially computing each samplewise estimator, or parallelize the operation by computing the samplewise estimators simultaneously. The trade-off between the two strategies is that the former has a minimal memory footprint (scales with dimension of problem) but takes longer, while the latter effectively parallelizes the operation but has to store $b$ vectors.

Thus, we define another set of ZO gradient estimators that accommodate data parallelism: we perturb each samplewise SPSA estimator in the same direction $\textbf{z}\in\R^d$. For minibatch $\mathcal{I}\subset [n]$ of size $b$ we can construct
\begin{align}\label{eq:minibatchspsa2}
    \bar{\nabla} f_{\mathcal{I}}(\boldsymbol{\theta}):=\frac{\frac{1}{b}\sum_{i\in\mathcal{I}}[f_i(\boldsymbol{\theta} + \mu \textbf{z}) - f_i(\boldsymbol{\theta} - \mu \textbf{z})]}{2\mu}\textbf{z},
\end{align}
and
\begin{align}\label{eq:fullbatchspsa2}
    \bar{\nabla} f(\boldsymbol{\theta}):= \bar{\nabla} f_{[n]}(\boldsymbol{\theta}).
\end{align}

From an implementation standpoint, estimators \eqref{eq:minibatchspsa2} and \eqref{eq:fullbatchspsa2} can exploit data parallelism across the batch $\mathcal{I}$ and benefit from a minimal required memory footprint.



\subsection{Memory-efficient ZO-SGD (MeZO)}
In \citet{malladi2023mezo}, the authors propose a memory-efficient ZO-SGD optimizer (MeZO) to fine-tune LMs. MeZO is a ZO-SGD algorithm that estimates gradients based on the two-point SPSA estimator introduced in~\eqref{eq:minibatchspsa2}. 

\begin{definition}\label{def:zosgd}
    \textit{(ZO-SGD)} Consider solving optimization \eqref{eq:empiricalriskmin}. ZO-SGD is an iterative ZO optimizer characterized with update rule
    $$\boldsymbol{\theta}^{(t+1)}:= \boldsymbol{\theta}^{(t)} - \eta \bar{\nabla}f_{\mathcal{I}}(\boldsymbol{\theta}^{(t)}),$$ for learning rate $\eta>0$, and SPSA estimator $\bar{\nabla}f_{\mathcal{I}}(\boldsymbol{\theta}^{(t)})$ over minibatch $\mathcal{I}\in[n]$.
\end{definition}

Implementing a vanilla ZO-SGD algorithm requires twice the memory footprint of inference due to the need to store the perturbation vector $\textbf{z}\in\R^d$. In \citet{malladi2023mezo}, an in-place implementation of the algorithm is proposed, where the requirement of storing a full set of perturbation scalars is mitigated by merely storing a single random seed and regenerating the perturbation vector when required. This brings the memory cost of MeZO down to that of inference (see Appendix \ref{appendix:spsaimplementation} for more details on the implementation).

\subsection{ZO-SVRG}\label{ssection:zosvrg}

The Zeroth-Order Stochastic Variance Reduced Gradient (ZO-SVRG) \citep{zosvrg} method periodically combines a fullbatch gradient estimator with the minibatch estimator to mitigate the stochasticity of the latter. This variance reduction helps achieve a faster convergence rate compared to ZO-SGD \citep{zosvrg}. While the full algorithm is presented in Appendix \ref{appendix:zosvrg}, the update rule is:
\begin{align}\label{eq:zosvrg}
    \boldsymbol{\theta}^{(t+1)} \leftarrow \boldsymbol{\theta}^{(t)} - \eta [\hat{\nabla}f_{\mathcal{I}_t}(\boldsymbol{\theta}^{(t)}) - \hat{\nabla}f_{\mathcal{I}_t}(\bar{\boldsymbol{\theta}}) + \hat{\nabla}f(\bar{\boldsymbol{\theta}})]
\end{align}

where $\eta>0$ is the learning rate, $\mathcal{I}_t$ is a minibatch sampled at iteration $t$, $\boldsymbol{\theta}^{(t)}$ is the parameter state at iteration $t$, and $\bar{\boldsymbol{\theta}}$ is the last parameter state at which the fullbatch gradient estimator was computed. Throughout this work, we let $q\in\mathbb{N}$ denote the regularity of fullbatch SPSA computations, i.e. every $q$ steps the fullbatch SPSA estimator is computed.

\section{Our proposed method: MeZO-SVRG}\label{section:method}

In this section, we describe the proposed MeZO-SVRG method. We first motivate our method by discussing the observed limitations of MeZO and outline practical implementation concerns when using ZO-SVRG \citep{zosvrg} to mitigate these. 
We then introduce MeZO-SVRG as a variant of ZO-SVRG that minimizes memory usage with in-place operations and accommodates data parallelism in its gradient estimators.

\subsection{MeZO Limitations}\label{sssection:mezolimitations}

In \citet{malladi2023mezo}, authors mention that MeZO requires a suitable task prompt to perform well; under this setting the optimization trajectory is more well-behaved. This suggests that the applicability of MeZO is restricted to settings where the optimization landscape is sufficiently well-behaved and cannot be extended to more complex tasks such as pre-training. 
Moreover, the careful design of prompts for real-world fine-tuning tasks also demands additional effort and may not always be practical. This motivates developing a method that delivers robust performance independently of any reliance on input prompts.

While MeZO has demonstrated promise in fine-tuning settings, our empirical findings suggest that it still faces the following challenges: i) it is susceptible to instability when using smaller batch sizes, and ii) a considerable performance gap with respect to first-order (FO) fine-tuning exists in the non-prompted setting. We illustrate these issues in Figures \ref{fig:leastsquares}, \ref{fig:mnistbatch} and \ref{fig:robertabatch}.
The details of the experiments are provided in Appendix \ref{appendix:limitsmezo}. 
These observations motivate using variance-reduction techniques that leverage larger batch information to improve stability and convergence of ZO methods in the large-scale problem settings.

\subsection{ZO-SVRG Implementation Concerns}
\textbf{Memory Footprint.} 
Recalling $\boldsymbol{\theta}\in\R^d$, the ZO-SVRG method has a minimum memory requirement of storing $d$ values. A naive implementation of ZO-SVRG presented in Algorithm \ref{alg:method1} (see Appendix~\ref{appendix:zosvrg}) would require an additional $2d$ of memory space for storing the fullbatch gradient estimator and parameter state $\bar{\boldsymbol{\theta}}$ used in \eqref{eq:zosvrg}. Moreover, computing and storing $\hat{\nabla}f_{\mathcal{I}_t}(\boldsymbol{\theta}^{(t)})$ and $\hat{\nabla}f_{\mathcal{I}_t}(\bar{\boldsymbol{\theta}})$ also accrues an additional $d$ values of memory each. Thus, a naive implementation of Algorithm \ref{alg:method1} would require a minimum memory budget equivalent to $5\times$ the memory budget of inference, which is prohibitive for sufficiently large $d$. 

\textbf{Iteration Speed Concerns.} The original ZO-SVRG method is proposed with the inefficient gradient estimators introduced in \eqref{eq:minibatchspsa1} and \eqref{eq:fullbatchspsa1}. In both, SPSA estimators are computed for individual samples and averaged over the batch. Consider computing \eqref{eq:minibatchspsa1} with batch size $b$. If we want to fully parallelize operations, we require computing and storing $b$ many $\hat{\nabla} f_i(u)$ estimators. However, this increases the memory footprint. To save on memory usage, in-place operations can be used. However, this has the effect of drastically reducing the computation speed as we need to sequentially compute each of the $b$ estimators in \eqref{eq:minibatchspsa1}. 


\subsection{MeZO-SVRG}\label{ssection:MeZO-SVRG}
We propose MeZO-SVRG: a variant of ZO-SVRG that improves iteration speed by using estimators \eqref{eq:minibatchspsa2}, \eqref{eq:fullbatchspsa2} and reduces the memory footprint with in-place operations. The method is summarized in Algorithm \ref{alg:method2}.

\textbf{Efficient Gradient Estimation.} We utilize the efficient gradient estimators introduced in \eqref{eq:minibatchspsa2} and \eqref{eq:fullbatchspsa2} that perturb the entire batch in a single direction. These estimators accommodate data parallelism offered by modern ML frameworks. Furthermore, we can utilize the ``resampling trick'' introduced in \citet{malladi2023mezo} to reduce the memory footprint when computing each of \eqref{eq:minibatchspsa2} and \eqref{eq:fullbatchspsa2}; each estimator requires a memory footprint equivalent to the problem dimension $d$ (see Appendix \ref{appendix:spsaimplementation} for the memory-efficient SPSA computation procedure). Thus, using estimators \eqref{eq:minibatchspsa2} and \eqref{eq:fullbatchspsa2}, eliminates the memory/speed trade-off plaguing the ZO-SVRG implementation and get the best of both worlds.

\begin{algorithm}[h]
   \caption{Memory-Efficient ZO-SVRG (MeZO-SVRG)}
   \label{alg:method2}
   \footnotesize
\begin{algorithmic}
    \STATE {\bfseries Input:} Total iterations $T$, learning rates $\eta_1, \eta_2>0$, minibatch size $b$, parameters $\boldsymbol{\theta}_0$, iterations between full-batch gradient $q\in\mathbb{N}$ 
   \STATE {\bfseries begin method}
    \FOR{$t = 0, \dots, T$}
   \IF {$t \mod q = 0$}
   \STATE 1. $\textbf{g} \leftarrow \bar{\nabla}f(\boldsymbol{\theta}^{(t)})$
   \STATE 2. $\bar{\boldsymbol{\theta}} \leftarrow \boldsymbol{\theta}^{(t)}$ 
   \STATE 3. update: $\boldsymbol{\theta}^{(t+1)} \leftarrow \boldsymbol{\theta}^{(t)} - \eta_1 \textbf{g} \quad$\texttt{\#in-place}
   \ELSE
   \STATE 4. Choose mini-batch $\mathcal{I}_t$ of size b
   \STATE 5. $\boldsymbol{\theta}^{(t)} \leftarrow \boldsymbol{\theta}^{(t)} - \eta_2\bar{\nabla}f_{\mathcal{I}_t}(\boldsymbol{\theta}^{(t)})\quad $ \texttt{\#in-place}
    \STATE 6. $\boldsymbol{\theta}^{(t)} \leftarrow \boldsymbol{\theta}^{(t)} + \eta_2\bar{\nabla}f_{\mathcal{I}_t}(\bar{\boldsymbol{\theta}})\quad $ \texttt{\#in-place}
   \STATE 7. update: $\boldsymbol{\theta}^{(t+1)} \leftarrow \boldsymbol{\theta}^{(t)} - \eta_2 \textbf{g}\quad$ \texttt{\#in-place}
   \ENDIF{}
   \ENDFOR{}
   \STATE {\bfseries end} 
\end{algorithmic}
\end{algorithm}

\begin{figure*}[!ht]
    \centering
    \begin{minipage}{\columnwidth}
    \centering
        \includegraphics[width=0.7\linewidth]{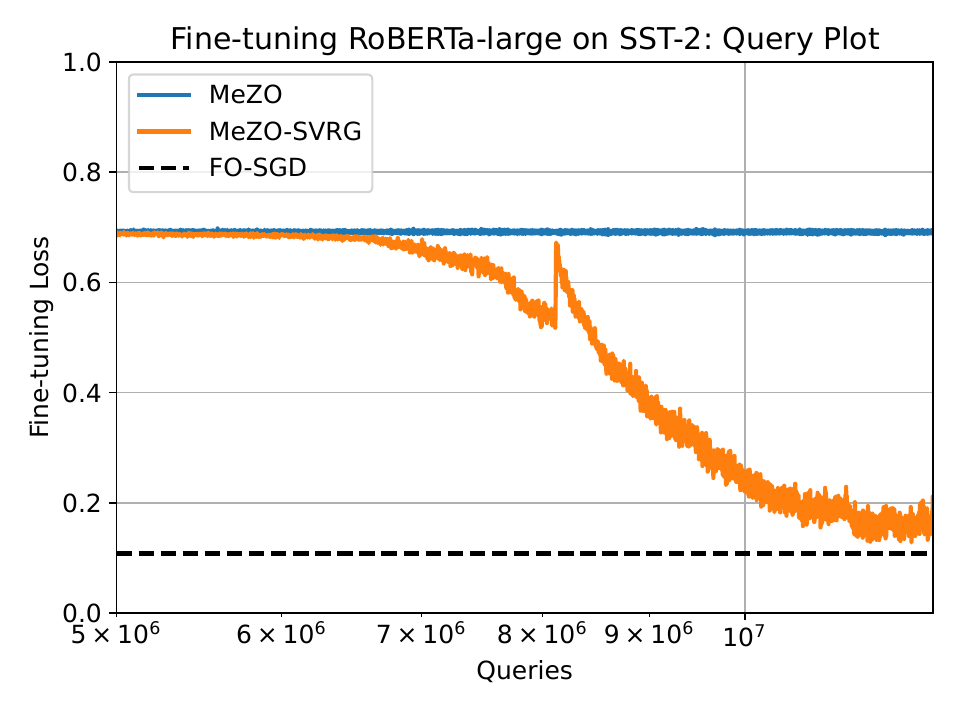}
        \subcaption{}
        \label{fig:convergencerobertaquery}
    \end{minipage}
    \hfill
    \begin{minipage}{\columnwidth}
    \centering
        \includegraphics[width=0.7\linewidth]{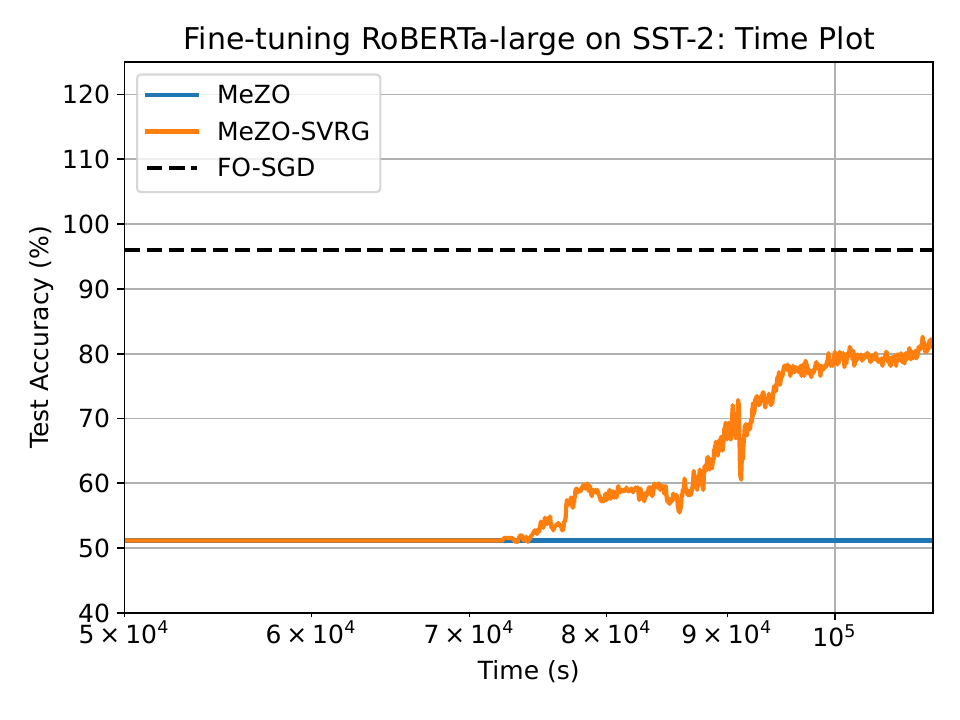}
        \subcaption{}
        \label{fig:accuracyrobertatime}
    \end{minipage}
    \caption{Performance of MeZO-SVRG, MeZO and FO-SGD when fine-tuning RoBERTa-large on the SST-2 \citep{socher-etal-2013-recursivesst2} dataset. The dashed line serves as a reference to the training loss/test accuracy achieved by FO-SGD. (a) MeZO-SVRG is able to significantly reduce the convergence gap to FO-SGD compared to MeZO. (b) MeZO-SVRG attains a considerably better test accuracy than MeZO.}
\end{figure*}

\textbf{In-place Operations for Memory Efficiency.} MeZO-SVRG leverages in-place operations to minimize memory allocation for new variable definitions. Memory space is required for the current state of the $d$ parameters, a copy of the parameter state after each fullbatch SPSA computation as well as the fullbatch SPSA estimator itself. This requires a minimum memory requirement of storing $3d$ values. 
The minibatch updates can then be computed in-place in Lines 5, 6, and 7; thus, MeZO-SVRG achieves a reduced minimum memory footprint to $3\times$ that of inference.

\begin{remark}
As MeZO-SVRG queries the loss function with an inference pass through a network, it minimizes the storage of activations and intermediate variables. The memory footprint of MeZO-SVRG thus mainly stems from retaining copies of the fullbatch gradient estimator and parameters. Therefore, this method scales well with increasing batch sizes. Table \ref{tab:memory} shows that for increasing batch sizes of up to 64, MeZO-SVRG yields more than 70\% memory savings compared to first-order SGD (FO-SGD) on the RoBERTa-large \citep{robertalarge} model. Similarly, MeZO-SVRG improves significantly on memory usage compared to FO-SGD for large context lengths and a fixed batch size (consistently $2\times$ smaller footprint, see Figure \ref{fig:memoryplot}). 
\end{remark}
\vspace{0.2em}
\begin{remark}
By storing $\bar{\boldsymbol{\theta}}$ in Line 2 (Algorithm \ref{alg:method2}), we can keep recomputing the fullbatch estimator on demand without storing $\textbf{g}$. This would lower the memory footprint of MeZO-SVRG to $2\times$ that of inference. However, as computing the fullbatch estimator can slow down the iteration speed, throughout this work we store it.
\end{remark}
\vspace{0.2em}
\begin{remark}
The memory analysis above does not account for any constant implementation overhead or intermediate activation storage during a forward pass of a network. Thus, in practice the memory usage ratio between MeZO and MeZO-SVRG is smaller (see Table \ref{tab:memory}).
\end{remark}
\vspace{0.2em}
\begin{remark}
In practice, fine-tuning datasets can be large enough that computing fullbatch SPSA estimators is infeasible (e.g. more than $10^5$ training examples). MeZO-SVRG can be adapted so that the fullbatch estimator is approximated with a large batch estimator (e.g. with 512 or 1024 samples). In this case, the updates blend minibatch and large batch (as opposed to fullbatch) information.
\end{remark}

\textbf{Additional Learning Rate.} In Algorithm \ref{alg:method2}, we also include two independent learning rates $\eta_{1}$ and $\eta_{2}$ for the fullbatch and minibatch updates as shown in Lines 3 and Lines 5-7, respectively, of Algorithm \ref{alg:method2}. This design choice is based on our empirical observation that fullbatch updates are more accommodating of larger learning rates than minibatch steps. In our experiments we find that setting $\eta_1 > \eta_2$ improves convergence speed (see Appendices \ref{appendix:distilberthyperparameter}, \ref{appendix:robertalargehyperparameter}, \ref{appendix:hyperparameterautoregressive}). 
\renewcommand{\arraystretch}{1.1}
\begin{table*}[!h]
\centering{\footnotesize
            \begin{tabular}{lcccc cccc}
            \toprule
             & \multicolumn{4}{c}{\textbf{DistilBert} Full-Precision (FP32)} & \multicolumn{4}{c}{\textbf{RoBERTa-large} Full-Precision (FP32)}  \\
            \cmidrule(lr){2-5} \cmidrule(lr){6-9}
              \textbf{Method}  & MNLI   &  QNLI & SST-2 & CoLA & MNLI  &  QNLI & SST-2  & CoLA \\
            \midrule
            MeZO (Full FT)  & 36 (1.09) & 50 (0.69) & 52 (0.68) & 63 (0.64) & 43 (0.94) & 59 (0.58) & 56 (0.69) & 68 (0.51) \\
            MeZO-SVRG (Full FT)  & \textbf{46 (0.08)} & \textbf{68 (0.23)} & \textbf{72 (0.02)} & \textbf{68 (0.28)} & \textbf{49 (0.81)} & \textbf{80 (0.28)} & \textbf{84 (0.13)} & \textbf{79 (0.06)} \\
            \hdashline
            FO-SGD (Full FT)  &  59 (0.01) & 78 (0.04) & 88 (0.01) & 70 (0.02) & 85 (0.03) & 89 (0.01) & 96 (0.11) & 85 (0.01) \\
            \midrule
            MeZO (Partial FT)   & 35 (1.09) & 52 (0.69) & 51 (0.70) & 60 (0.64) & 42 (1.07)& 50 (0.69)& 54 (0.68) & 65 (0.59) \\
            MeZO-SVRG (Partial FT)  & \textbf{47 (0.28)} & \textbf{65 (0.29)} & \textbf{74 (0.10)} & \textbf{67 (0.36)} & \textbf{43 (0.82)} & \textbf{67 (0.46)} & \textbf{72 (0.59)} & \textbf{79 (0.35)} \\
            \hdashline
            FO-SGD (Partial FT)  &  48 (0.26)& 59 (0.42) & 85 (0.05) & 66 (0.45) & 52 (0.99) & 72 (0.60) & 89 (0.58) & 84 (0.41)\\
            \bottomrule
            \end{tabular}}
            \caption{Experiments on DistilBert and RoBERTa-large. We show the test accuracies and fine-tuning losses (in parentheses) of MeZO-SVRG and MeZO for both full/partial-parameter FT. We also provide results for FO-SGD as an upper-bound benchmark on performance. MeZO-SVRG consistently outperforms MeZO and significantly closes the gap to FO-SGD. 
            }
            \label{tab:accuraciesmasked}
\end{table*}


\begin{table*}[!h]
\centering{\footnotesize
            \begin{tabular}{lcccccccc}
            \toprule
             & \multicolumn{3}{c}{\textbf{GPT2} Full-Precision (FP32)} & \multicolumn{3}{c}{\textbf{OPT-2.7B} Full-Precision (FP32)} & \multicolumn{2}{c}{\textbf{OPT-6.7B} Half-Precision (BF16)}\\
            \cmidrule(lr){2-4} \cmidrule(lr){5-7}
            \cmidrule(lr){8-9}
              \textbf{Method}  & MNLI   & SST-2 & CoLA & MNLI   & SST-2 & CoLA & SST-2 & BoolQ\\
            \midrule
            MeZO   &  41 (0.65) & 59 (0.32) & 61 (0.35) & 42 (1.09) & 61 (0.65) & 62 (0.58) & 74 (0.53) & 65 (0.63)\\
            MeZO-SVRG   & \textbf{53 (0.41)} &  \textbf{65 (0.20)}  & \textbf{69 (0.25)} & \textbf{52 (0.81)} &  \textbf{65 (0.55)} & \textbf{67 (0.53)} &
            \textbf{77 (0.52)} &
            \textbf{69 (0.57)}\\
            \hdashline
            FO-SGD   &  69 (0.59) & 72 (0.23) & 78 (0.38) & 78 (0.33)  & 98 (0.02) & 94 (0.17) & 91 (0.10) & 84 (0.29)\\
            \bottomrule
            \end{tabular}}
            \caption{Experiments on AR models. We show the test accuracies and fine-tuning losses (in parentheses) of MeZO-SVRG and MeZO for full-parameter FT. For reference we also provide results for FO-SGD as an upper-bound benchmark on performance. MeZO-SVRG consistently outperforms MeZO and approaches FO-SGD performance. 
            }
            \label{tab:accuraciesautoregressive}
\end{table*}

\textbf{Storage Efficiency of MeZO-SVRG.} Parameter-efficient fine-tuning (PEFT) reduces the size of fine-tuned model checkpoints by optimizing only a small subset of parameters, e.g. 
LoRA \citep{hu2022lora} and prefix-tuning \citep{li-liang-2021-prefix}. Both MeZO and MeZO-SVRG have the benefit of being able to recover an entire fine-tuning trajectory by storing a single seed and the difference of loss scalars in \eqref{eq:minibatchspsa2} at each step. The stored seed can regenerate step-wise seeds to recover the perturbation vectors $\textbf{z}$ used for each SPSA computation. Together with the stored difference in loss values, we can recover the exact gradient estimators used in the fine-tuning process without needing to perform any forward passes. This allows recovering any model checkpoint along the fine-tuning trajectory. As we store only the initial random seed and a sequence of difference of loss scalars, we can achieve significant storage efficiency.

\textbf{Compatibility with Non-differentiable Objectives and PEFT.} 
As MeZO-SVRG uses only forward passes and a difference of loss values to estimate the gradient, it is applicable to settings where gradients are inaccessible or infeasible to compute, e.g. when considering non-differentiable objectives such as ranking in RLHF \citep{rlhf} or access to model gradients is restricted. Similar to MeZO, MeZO-SVRG also remains compatible with PEFT (e.g. LoRA \citep{hu2022lora}, prefix-tuning \citep{li-liang-2021-prefix}). 

\section{Experiments}\label{section:exp}
\begin{table*}
    \centering{\footnotesize
        \begin{tabular}{l ccccc cc}
            \toprule
              &\multicolumn{2}{c}{} & \multicolumn{5}{c}{\textbf{Memory Usage in GB for RoBERTa-large}} \\
              &\multicolumn{2}{c}{\textbf{Largest OPT/GPT that can fit}} & \multicolumn{3}{c}{Fixed context length (cl=128)} & \multicolumn{2}{c}{Fixed batch size (bs=64)} \\
            \cmidrule(lr){2-3} \cmidrule(lr){4-6} \cmidrule(lr){7-8}  \textbf{Method} & A100 (40GB)    &  H100 (80GB) & $\textrm{bs}=16$& $\textrm{bs}=32$& $\textrm{bs}=64$&
            $\textrm{cl}=256$&
            $\textrm{cl}=512$\\
            \midrule
            MeZO   & 6.7B  & 13B & 2.07 (69\%) & 2.21 (79\%) & 2.51 (88\%) & 3.35 & 5.97\\
            \hdashline
            MeZO-SVRG  & \textbf{2.7B} & \textbf{6.7B}  & \textbf{4.36 (35\%)} & \textbf{4.51 (58\%)} & \textbf{4.72 (76\%)} & 5.13 & 8.02 \\
             FO-SGD  & 1.6B   & 2.7B  & 6.74 & 10.67 & 18.55 & OOM & OOM\\
             FO-Adam & 350M & 1.3B & 10.44 & 14.33 & 22.41 & OOM & OOM\\
            \bottomrule
        \end{tabular}}
    \caption{Shows the largest AR models that can fit on single 40, 80GB GPUs. We also measure the memory usage under different batch sizes (bs) and context lengths (cl) when fine-tuning RoBERTa-large.  Percentages indicate the memory savings with respect to FO-SGD.}
    \label{tab:memory}
\end{table*}

In this section, we evaluate MeZO-SVRG on a variety of fine-tuning tasks by comparing the performance against MeZO \citep{malladi2023mezo} and memory usage against first-order stochastic gradient descent (FO-SGD) \citep{sgd} and first-order Adam (FO-Adam) \citep{Kingma2015AdamAM}. We demonstrate empirically that MeZO-SVRG performs well in the absence of input prompts: it is able to significantly reduce the performance gap to FO methods and consistently surpasses MeZO's performance on a variety of fine-tuning tasks with significantly lower computation time. Furthermore, MeZO-SVRG necessitates a considerably smaller memory footprint compared to FO-SGD and FO-Adam.

\textbf{Setup.} 
We evaluate on both full (FP32) and half (BF16) precision. We detail the experiment results for the BF16 setting in Appendix \ref{appendix:hprecision}.
We mainly consider a prompt-free fine-tuning setting (more challenging loss landscape) but include prompted results for RoBERTa-large \citep{robertalarge} in Appendix \ref{appendix:robertalarge}. All experiments are run on a single GPU; specifically, we consider Nvidia A100 40GB or H100 80GB GPUs. We evaluate the algorithms under two fine-tuning strategies: full- and partial-parameter fine-tuning. In the latter we fine-tune the last layers of the chosen models. We define a query as one forward pass for a single sample. For a fair comparison between MeZO and MeZO-SVRG, we ensured that the total number of queries used by both remains the same; thus, as MeZO-SVRG accrues more queries per step due to the fullbatch gradient estimates, MeZO was run for more steps. Further details of the experiment setup and implementation are provided in Appendices \ref{appendix:experimentsetup} and \ref{appendix:ablations}.

\textbf{Dataset.} We fine-tune on tasks from the NLP GLUE and SuperGLUE benchmarks: Multi-Genre Natural Language Inference Corpus (MNLI), Stanford Question Answering Dataset (QNLI), Stanford Sentiment Treebank (SST-2), Corpus of Linguistic Acceptability (CoLA), and BoolQ \citep{williams2018broadmnli, wang-etal-2018-glue, socher-etal-2013-recursivesst2, warstadt2018neuralcola, superglue}. Similar to \citet{malladi2023mezo}, for each task, our experiments are conducted in a many-shot fine-tuning setting: 512 training examples, 256 validation examples and 256 test samples are randomly sampled from the dataset. 


\textbf{Language Models.} We considered Distilbert \citep{sanh2020distilbert} and RoBERTa-large as our masked LMs. Details on the hyperparameter configuration used for these experiments are provided in Appendix \ref{appendix:distilbert}, \ref{appendix:robertalarge} and \ref{appendix:hprecision}. 
We extend our evaluation to fine-tuning larger autoregressive (AR) models. We consider the GPT2 \citep{radford2019languagegpt2}, OPT-2.7B, and OPT-6.7B \citep{zhang2022opt} models. The hyperparameter configurations used for these experiments are detailed in Appendix \ref{appendix:autoregressive} and \ref{appendix:hprecision}. 

\subsection{LM Fine-tuning Performance}
\begin{table*}[h!]
\centering{\footnotesize
            \begin{tabular}{lcccccccc}
            \toprule
              & \multicolumn{4}{c}{\textbf{GPT2}} & \multicolumn{4}{c}{\textbf{OPT-2.7B}}  \\
            \cmidrule(lr){2-5} \cmidrule(lr){6-9}
             \textbf{Method} & MNLI & QNLI & SST-2 & CoLA & MNLI & QNLI & SST-2 & CoLA\\
            \midrule
            MeZO   & 0.4  & 5.5 & 19.4 & 2.8 & 2.6 & 5.3 & 48 & 55 \\
            MeZO-SVRG   &  \textbf{0.3}  & \textbf{1.9} & \textbf{5.6} & \textbf{2.2} & \textbf{1.1} & \textbf{2.7} & \textbf{25} & \textbf{1.4}\\
            \bottomrule
            \end{tabular}}
\caption{Required GPU-hrs to achieve equivalent performance levels for MeZO-SVRG and MeZO.}
\label{tab:gpuhrs}
\end{table*}


\textbf{MeZO-SVRG significantly outperforms MeZO in both the fine-tuning loss convergence and test accuracy.} On all models and tasks, MeZO-SVRG improves on the test accuracy over MeZO: we see an improvement of up to 20\% in Tables \ref{tab:accuraciesmasked}, \ref{tab:accuraciesautoregressive} and Figure \ref{fig:accuracyrobertatime}. MeZO-SVRG also consistently achieves an improved fine-tuning loss compared to MeZO. This is particularly evident in Figure \ref{fig:convergencerobertaquery}. Additional results are presented in Appendices \ref{appendix:distilbert}, \ref{appendix:robertalarge} and \ref{appendix:autoregressive}. 

\textbf{MeZO-SVRG works well on both full and partial fine-tuning.} The improvement over MeZO is consistent across both fine-tuning modes. In partial fine-tuning, MeZO-SVRG often achieves comparable performance to FO-SGD (within 5\%) on several tasks (see Table \ref{tab:accuraciesmasked}). 

\textbf{MeZO-SVRG closes the gap to FO-SGD in training convergence and matches the test accuracy.} Tables \ref{tab:accuraciesmasked} and \ref{tab:accuraciesautoregressive} demonstrate how MeZO-SVRG closes the performance gap with FO-SGD compared to MeZO. 

\textbf{MeZO-SVRG's superior performance to MeZO extends to the low (half) precision (BF16) setting.} We summarize the half-precision results in Appendix \ref{appendix:hprecision}.

\subsection{Memory Usage Profiling}
\begin{figure}[h]
    \centering
    \includegraphics[width=0.75\columnwidth]{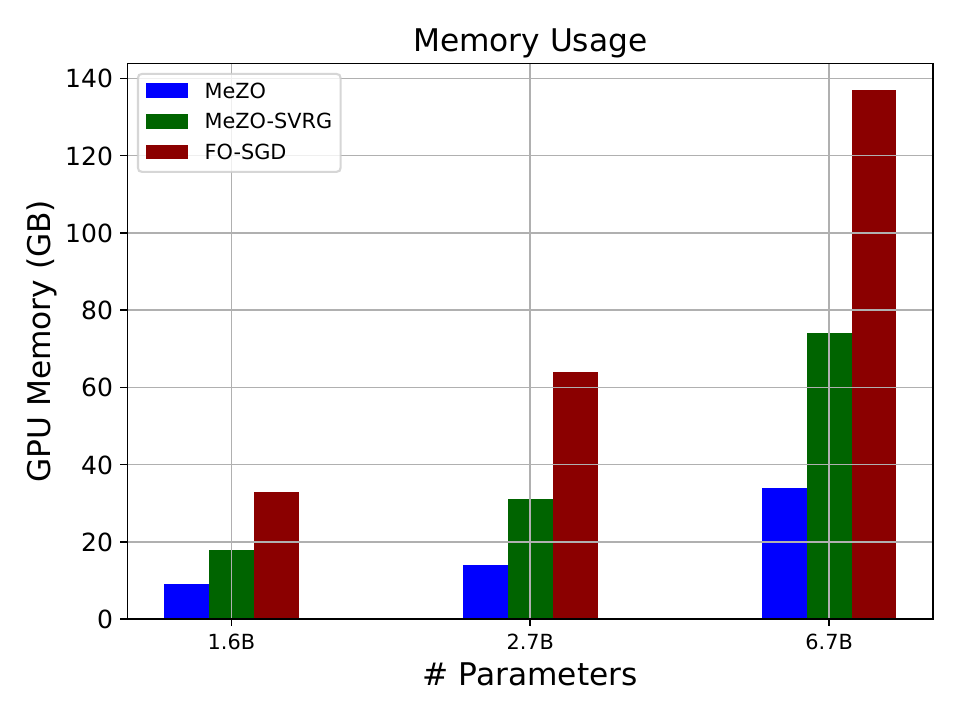}
    \vspace{-1em}
    \caption{Shows the minimum memory usage on autoregressive models ($\textrm{batch size}=1$, use max context length of model). MeZO-SVRG yields a $2\times$ smaller memory footprint over FO-SGD.}
    \label{fig:memoryplot}
\end{figure}
\textbf{MeZO-SVRG can fit larger models on the same hardware than FO-SGD.} We measure the minimum memory requirement to fine-tune (full-parameter) the considered autoregressive models using the different methods. We fine-tune GPT2, OPT-2.7B and OPT-6.7B on MNLI by setting the input sequence length to the maximum context length of the LM and report the peak GPU memory consumption for $\textrm{batch size}=1$. Table \ref{tab:memory} shows that \textit{MeZO-SVRG consistently yields a significantly improved memory footprint compared to FO-SGD (approximately $2\times$ across considered autoregressive models)}. More details on how memory profiling was done is summarized in Appendix \ref{appendix:memory}. 

\textbf{MeZO-SVRG's memory savings progressively improve over FO-SGD and FO-Adam with increasing batch size and context lengths.} For this experiment, we consider the masked model RoBERTa-large. Again we fine-tune on the MNLI dataset using a single Nvidia A100 40GB GPU and set the input sequence length to a constant size of 128. We measure the peak GPU memory consumption for the different methods for varying batch sizes \{16, 32, 64\}. Figure \ref{tab:memory} shows that for a fixed model (RoBERTa-large) and context length (128), MeZO-SVRG exhibits memory savings of up to 76\% w.r.t FO-SGD. We also vary the context lengths \{256, 512\} of the input for a fixed batch size (64). Again we observe significant benefits for MeZO-SVRG over FO-SGD: the latter is subject to out-of-memory errors when running this setting with 40GB GPUs.

MeZO-SVRG consumes more memory than MeZO due to its need to store copies of the parameters and fullbatch SPSA estimators (see Algorithm \ref{alg:method2}), but compensates by delivering notable gains in test performance and computation time. 

\subsection{Computation Time} 
We compare the speed of MeZO-SVRG and MeZO by measuring the total GPU-hours required to achieve a certain performance threshold. For a fair comparison, we set the threshold to a level attained by both methods, namely, MeZO's peak test accuracy. Table \ref{tab:gpuhrs} shows that for GPT2 and OPT-2.7B, MeZO-SVRG consistently achieves superior test accuracy with less than half the GPU-hours.


\subsection{Understanding MeZO-SVRG}
To better understand how the perturbation scale $\mu$ and regularity of full-batch update steps determined by $q$ impact the MeZO-SVRG performance, we perform ablation studies in Appendix \ref{appendix:ablations} with DistilBert on the MNLI dataset. For large fine-tuning datasets, estimating the full-batch gradient can be impractical. Therefore, we included an ablation study to examine the impact on MeZO-SVRG performance when substituting the full-batch gradient estimator with a large-batch estimator. Results in Table \ref{tab:ftlargebatchhp} suggest that large-batch estimators pose an effective alternative to full-batch estimators.

\section{Convergence Theory}\label{section:theory}
In this section, we provide a convergence analysis of MeZO-SVRG. We start by showing that our estimator is unbiased w.r.t. a minibatch set $\mathcal{I}$. We assume $\mathcal{I}$ is drawn either uniformly random with or without replacement. 

\begin{lemma}\label{lemma:unbiasness}
\begin{align}
    \mathbb{E}_\mathcal{I} \bar{\nabla} f_\mathcal{I}(\boldsymbol{\theta})
    = \bar{\nabla} f(\boldsymbol{\theta})    
\end{align}
\end{lemma}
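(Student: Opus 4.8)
The plan is to exploit the single shared perturbation structure of the estimator $\bar{\nabla}$. The crucial observation is that in~\eqref{eq:minibatchspsa2} and~\eqref{eq:fullbatchspsa2} every samplewise term is perturbed along the \emph{same} direction $\textbf{z}$, so the only randomness governed by $\mathbb{E}_\mathcal{I}$ is the choice of the minibatch $\mathcal{I}$, while $\textbf{z}$ and $\boldsymbol{\theta}$ are held fixed. Concretely, I would first abbreviate the deterministic (given $\boldsymbol{\theta}$ and $\textbf{z}$) scalars $g_i := f_i(\boldsymbol{\theta}+\mu\textbf{z}) - f_i(\boldsymbol{\theta}-\mu\textbf{z})$, so that $\bar{\nabla} f_\mathcal{I}(\boldsymbol{\theta}) = \frac{\textbf{z}}{2\mu}\cdot\frac{1}{b}\sum_{i\in\mathcal{I}} g_i$ and $\bar{\nabla} f(\boldsymbol{\theta}) = \frac{\textbf{z}}{2\mu}\cdot\frac{1}{n}\sum_{i=1}^{n} g_i$. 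Since the vector prefactor $\frac{\textbf{z}}{2\mu}$ does not depend on $\mathcal{I}$, linearity of expectation lets me pull it outside, reducing the claim to the scalar identity $\mathbb{E}_\mathcal{I}\!\left[\frac{1}{b}\sum_{i\in\mathcal{I}} g_i\right] = \frac{1}{n}\sum_{i=1}^n g_i$.

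Next I would verify this elementary unbiasedness of the minibatch mean under each of the two stated sampling regimes. For sampling with replacement, writing $\mathcal{I} = \{i_1,\dots,i_b\}$ with each $i_j$ uniform on $[n]$, I get $\mathbb{E}[g_{i_j}] = \frac{1}{n}\sum_{i=1}^n g_i$ for every $j$, and averaging over $j$ gives the result immediately. For sampling without replacement, I would instead use indicator variables: by symmetry each index satisfies $\Pr[i\in\mathcal{I}] = \binom{n-1}{b-1}/\binom{n}{b} = b/n$, so $\mathbb{E}_\mathcal{I}\!\left[\sum_{i\in\mathcal{I}} g_i\right] = \frac{b}{n}\sum_{i=1}^n g_i$, and dividing by $b$ again yields the population mean. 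Substituting back and restoring the prefactor recovers exactly $\bar{\nabla} f(\boldsymbol{\theta})$.

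There is essentially no hard step here: the lemma is true for \emph{any} values $g_i$, so no appeal to smoothness of $f_i$, to the asymptotic unbiasedness in $\mu$, or to properties of the Gaussian $\textbf{z}$ is needed. The only point requiring a little care is to treat the two sampling schemes separately (or to phrase the indicator argument so that it covers both), and to state clearly that the expectation is conditional on $\textbf{z}$ so that the prefactor may be extracted. Once those bookkeeping details are in place, the identity follows from linearity of expectation and the standard fact that a uniform minibatch average is an unbiased estimator of the full-batch average.
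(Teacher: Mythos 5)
Your proof is correct and takes essentially the same route as the paper: the paper's proof also holds $\mathbf{z}$ fixed under $\mathbb{E}_\mathcal{I}$, applies linearity, and uses the fact that under uniform minibatch sampling each index effectively contributes with probability $b/n$ (compressed there into a one-line ``re-ordering'' step). Your explicit case split between sampling with and without replacement, and your remark that the expectation is conditional on $\mathbf{z}$, merely spell out details the paper leaves implicit.
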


We denote $\textbf{u}_\mathcal{I}= \bar{\nabla} f_\mathcal{I}(\boldsymbol{\theta}) - \bar{\nabla} f_\mathcal{I}(\boldsymbol{\theta}')
- \mathbb{E}_\mathcal{I} [\bar{\nabla} f_\mathcal{I}(\boldsymbol{\theta}) - \bar{\nabla} f_\mathcal{I}(\boldsymbol{\theta}')]
$ and $\textbf{u}_\mathcal{I} = \textbf{u}_i$ for $\mathcal{I} = \{i\}$. 
This $\textbf{u}_i$ is a key component from the idea of control covariates \cite{tucker2017rebar} in reducing variance.

\begin{lemma}\label{lemma:orthogonal}
$\sum_{i=1}^n \mathbf{u}_i = 0$ and $\mathbb{E}_\mathcal{I}[\mathbf{u}_i \mathbf{u}_j] =0$ where $i, j \in \mathcal{I}$ and $i\neq j$.  
\end{lemma}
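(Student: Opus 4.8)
The plan is to exploit the fact that the shared-direction estimators of \eqref{eq:minibatchspsa2}--\eqref{eq:fullbatchspsa2} are \emph{exactly linear} in the per-sample finite-difference coefficients, so that the full-batch estimator is the plain average of the samplewise ones. Concretely, setting $\mathcal{I}=\{i\}$ in \eqref{eq:minibatchspsa2} gives $\bar{\nabla} f_i(\boldsymbol{\theta}) = \tfrac{f_i(\boldsymbol{\theta}+\mu\mathbf{z}) - f_i(\boldsymbol{\theta}-\mu\mathbf{z})}{2\mu}\mathbf{z}$, and since all samplewise estimators share the \emph{same} perturbation $\mathbf{z}$, \eqref{eq:fullbatchspsa2} yields the identity $\bar{\nabla} f(\boldsymbol{\theta}) = \tfrac1n\sum_{i=1}^n \bar{\nabla} f_i(\boldsymbol{\theta})$. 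I would record this averaging identity first, as it is the structural property driving the whole argument.

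First I would prove $\sum_{i=1}^n\mathbf{u}_i=0$. By Lemma~\ref{lemma:unbiasness} the centering term is constant in $i$ and equals $\mathbb{E}_\mathcal{I}[\bar{\nabla} f_\mathcal{I}(\boldsymbol{\theta}) - \bar{\nabla} f_\mathcal{I}(\boldsymbol{\theta}')] = \bar{\nabla} f(\boldsymbol{\theta}) - \bar{\nabla} f(\boldsymbol{\theta}')$. Summing the definition of $\mathbf{u}_i$ over $i\in[n]$ and applying the averaging identity to both $\boldsymbol{\theta}$ and $\boldsymbol{\theta}'$, the aggregated samplewise differences contribute $n(\bar{\nabla} f(\boldsymbol{\theta}) - \bar{\nabla} f(\boldsymbol{\theta}'))$, which cancels the $n$ copies of the centering term. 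This holds pointwise in $\mathbf{z}$, so no expectation over $\mathbf{z}$ is needed.

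For the second claim I would read $i$ and $j$ as two distinct draws making up the minibatch $\mathcal{I}$ (sampling with replacement, so the draws are independent). The key observation is that the first identity is equivalent to $\mathbb{E}_i[\mathbf{u}_i] = \tfrac1n\sum_{i=1}^n\mathbf{u}_i = 0$, i.e. each centered term is mean-zero over a uniform index. Conditioning on $\mathbf{z}$ and using independence of the two draws then gives $\mathbb{E}_\mathcal{I}[\mathbf{u}_i\mathbf{u}_j] = \mathbb{E}_i[\mathbf{u}_i]\,\mathbb{E}_j[\mathbf{u}_j] = 0$, which is the control-covariate orthogonality used later to kill the cross terms in the variance bound.

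The main obstacle is conceptual rather than computational: pinning down exactly what randomness $\mathbb{E}_\mathcal{I}$ averages over and how the with/without-replacement distinction enters. The clean factorization $\mathbb{E}[\mathbf{u}_i\mathbf{u}_j]=\mathbb{E}[\mathbf{u}_i]\mathbb{E}[\mathbf{u}_j]$ is immediate only for independent (with-replacement) draws; for sampling without replacement the individual cross terms are negatively correlated and do not vanish one-by-one, so there I would instead invoke $\sum_{i}\mathbf{u}_i=0$ at the aggregate level, using $\sum_{i\neq j}\mathbf{u}_i^\intercal\mathbf{u}_j = \|\sum_i\mathbf{u}_i\|^2 - \sum_i\|\mathbf{u}_i\|^2 = -\sum_i\|\mathbf{u}_i\|^2$ to produce the usual finite-population correction. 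I would flag this explicitly so the downstream variance computation invokes the correct version.
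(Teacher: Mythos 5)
Your proof follows essentially the same route as the paper's: the shared-perturbation averaging identity $\sum_{i=1}^n \bar{\nabla} f_i(\boldsymbol{\theta}) = n\,\bar{\nabla} f(\boldsymbol{\theta})$ combined with Lemma~\ref{lemma:unbiasness} gives the cancellation proving $\sum_i \mathbf{u}_i = 0$, and the cross-term claim is obtained by conditioning on $\mathbf{z}$ and factorizing $\mathbb{E}[\mathbf{u}_i\mathbf{u}_j]=\mathbb{E}[\mathbf{u}_i]\,\mathbb{E}[\mathbf{u}_j]=0$ over independent index draws, exactly as in the paper. Your closing caveat is in fact sharper than the paper's proof: the justification ``as $i,j$ are independent'' literally covers only with-replacement sampling, even though the paper's standing assumption permits sampling without replacement, where individual cross terms are negatively correlated and only the aggregate identity $\sum_{i\neq j}\mathbf{u}_i^\intercal\mathbf{u}_j = \left\lVert\sum_i \mathbf{u}_i\right\rVert_2^2 - \sum_i \lVert\mathbf{u}_i\rVert_2^2 = -\sum_i \lVert\mathbf{u}_i\rVert_2^2$ vanishes appropriately---a distinction the paper's one-line argument glosses over.
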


\textbf{Assumptions. } A1: Functions $\{f_i\}$ are L-smooth, i.e., 
$\|\nabla f_i(\boldsymbol{\theta}) - \nabla f_i(\boldsymbol{\theta}')\| \leq L \|\boldsymbol{\theta}-\boldsymbol{\theta}'\|^2_2$. 
A2: The variance of stochastic gradients is bounded as 
$\frac{1}{n} \sum_{i=1}^n \|f_i(\boldsymbol{\theta}) - f(\boldsymbol{\theta}')\|_2^2 \leq \sigma^2$.

With the two Lemmas and assumptions, the following holds. 
\begin{theorem}\label{theorem:convergence}
Assume A1 and A2 holds. Let learning rates $\eta = \eta_1 = \eta_2$. Then, MeZO-SVRG satisfies
\begin{align}
    \mathbb{E}[\|\nabla f(\boldsymbol{\theta}^{(T)})\|_2^2] 
    \leq 
    \frac{f(\boldsymbol{\theta}^{(0)})-f^*}{T\bar \gamma}
    +
    \frac{L\mu^2}{T\bar \gamma}
    +
    \frac{c}{q\bar \gamma}
    \label{eq:cor_1}
\end{align}    
where $\bar \gamma$ and $c$ are functions of learning rate $\eta$, dimension $d$, minibatch size $b$ and $L, \sigma$. Moreover, 
by setting
\[
\mu = \frac{1}{\sqrt{dT}}, 
\quad \eta = \frac{\rho}{L}, 
\quad q = \lceil \frac{d}{31\rho} \rceil,
\]
where $\rho$ is a universal constant,
MeZO-SVRG satisfies
\begin{align}
    \mathbb{E}[\|\nabla f(\boldsymbol{\theta}^{(T)})\|_2^2] 
    = O\left( 
    \frac{d}{T} + \frac{\mathbf{1}(b<n)}{b}
    \right). \label{eq:cor_2}
\end{align}
\end{theorem}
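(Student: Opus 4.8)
The plan is to run the standard variance-reduced descent argument while carefully separating the two independent sources of randomness at each minibatch step---the draw of the minibatch $\mathcal{I}_t$ and the draw of the single perturbation direction $\textbf{z}$---and tracking how each propagates through the $L$-smoothness inequality of A1. Writing the effective search direction as $\textbf{v}_t := \bar{\nabla}f_{\mathcal{I}_t}(\boldsymbol{\theta}^{(t)}) - \bar{\nabla}f_{\mathcal{I}_t}(\bar{\boldsymbol{\theta}}) + \textbf{g}$, Lines 5--7 of Algorithm \ref{alg:method2} read $\boldsymbol{\theta}^{(t+1)} = \boldsymbol{\theta}^{(t)} - \eta\,\textbf{v}_t$, so A1 yields the one-step bound
\begin{equation}
\mathbb{E}[f(\boldsymbol{\theta}^{(t+1)})] \leq \mathbb{E}[f(\boldsymbol{\theta}^{(t)})] - \eta\, \mathbb{E}\langle \nabla f(\boldsymbol{\theta}^{(t)}), \textbf{v}_t\rangle + \tfrac{L\eta^2}{2}\,\mathbb{E}\|\textbf{v}_t\|_2^2 .
\end{equation}
I would bound the inner-product (bias) term and the quadratic (second-moment) term separately, then telescope over $t=0,\dots,T-1$.

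For the bias term, introduce the Gaussian-smoothed surrogate $f_\mu(\boldsymbol{\theta}) := \mathbb{E}_{\textbf{z}}[f(\boldsymbol{\theta}+\mu\textbf{z})]$, for which the symmetric two-point estimator in \eqref{eq:minibatchspsa2} is exactly unbiased, i.e. $\mathbb{E}_{\textbf{z}}[\bar{\nabla}f(\boldsymbol{\theta})] = \nabla f_\mu(\boldsymbol{\theta})$. Combined with \Cref{lemma:unbiasness} to average out the minibatch, the difference term has expectation $\nabla f_\mu(\boldsymbol{\theta}^{(t)}) - \nabla f_\mu(\bar{\boldsymbol{\theta}})$, while the frozen snapshot $\textbf{g} = \bar{\nabla}f(\bar{\boldsymbol{\theta}})$ has expectation $\nabla f_\mu(\bar{\boldsymbol{\theta}})$ over its own (stale) perturbation draw; the two snapshot contributions cancel, leaving $\mathbb{E}[\textbf{v}_t] = \nabla f_\mu(\boldsymbol{\theta}^{(t)})$. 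The Nesterov--Spokoiny smoothing estimate $\|\nabla f_\mu(\boldsymbol{\theta}) - \nabla f(\boldsymbol{\theta})\|_2 \leq O(\mu L d^{3/2})$ (a consequence of A1) then converts the inner product, via Young's inequality, into $-\tfrac{\eta}{2}\|\nabla f(\boldsymbol{\theta}^{(t)})\|_2^2$ plus an $O(\mu^2)$ remainder, which is the source of the $L\mu^2/(T\bar\gamma)$ term in \eqref{eq:cor_1}.

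For the second moment I would decompose $\mathbb{E}\|\textbf{v}_t\|_2^2 = \|\mathbb{E}\textbf{v}_t\|_2^2 + \mathrm{Var}(\textbf{v}_t)$ and split the variance into a minibatch-sampling part and a perturbation-direction part. The control-variate identities of \Cref{lemma:orthogonal} ($\sum_i \textbf{u}_i = 0$ and $\mathbb{E}_\mathcal{I}[\textbf{u}_i\textbf{u}_j]=0$) annihilate the cross terms and produce the familiar $1/b$ factor carrying a coefficient that vanishes at $b=n$---hence the indicator $\mathbf{1}(b<n)$. The perturbation-direction part is controlled by Gaussian moment identities ($\mathbb{E}[\textbf{z}\textbf{z}^\intercal]=I$ and $\mathbb{E}[\|\textbf{z}\|_2^2\langle\textbf{a},\textbf{z}\rangle^2]=O(d)\|\textbf{a}\|_2^2$), which inject the dimension factor $d$. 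Using A1 to bound $\bar{\nabla}f_{\mathcal{I}_t}(\boldsymbol{\theta}^{(t)}) - \bar{\nabla}f_{\mathcal{I}_t}(\bar{\boldsymbol{\theta}})$ by $\propto \|\boldsymbol{\theta}^{(t)}-\bar{\boldsymbol{\theta}}\|_2^2$ and A2 to bound the snapshot variance by $\sigma^2$, I expect a per-step estimate of the form $\mathbb{E}\|\textbf{v}_t\|_2^2 \lesssim d\,\|\nabla f(\boldsymbol{\theta}^{(t)})\|_2^2 + \tfrac{dL^2}{b}\|\boldsymbol{\theta}^{(t)}-\bar{\boldsymbol{\theta}}\|_2^2 + (\text{lower order})$. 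The drift $\|\boldsymbol{\theta}^{(t)}-\bar{\boldsymbol{\theta}}\|_2^2$ accumulates over the at-most-$q$ steps since the last snapshot; bounding it by summing step sizes over the block and feeding it back closes the recursion, yielding the $c/(q\bar\gamma)$ contribution and the positive descent coefficient $\bar\gamma$ once $\eta=\rho/L$ is small enough relative to the $d$ factor. Telescoping and dividing by $T\bar\gamma$ gives \eqref{eq:cor_1}; substituting $\mu=1/\sqrt{dT}$, $\eta=\rho/L$, $q=\lceil d/(31\rho)\rceil$ makes the smoothing and snapshot terms subdominant and, with $\bar\gamma=\Theta(1/(Ld))$, turns the leading term into $O(d/T)$ and the residual sampling variance into $\mathbf{1}(b<n)/b$, producing \eqref{eq:cor_2}.

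The main obstacle is the interaction between the single-direction SPSA estimator and the SVRG control variate: because the stored snapshot $\textbf{g}$ is frozen with a perturbation direction distinct from the fresh $\textbf{z}$ used at each inner step, the exact in-expectation cancellation that drives ordinary SVRG holds only after also averaging over the perturbation draws, and one must verify that this mismatch contributes only lower-order, drift- and smoothing-controlled error rather than an uncontrolled $O(d)$ variance term. Pinning down the constants so that $\bar\gamma$ remains positive under the prescribed $\eta$ and $q=\lceil d/(31\rho)\rceil$, and so that the dimension factor surfaces exactly once in the final $d/T$ rate, is the most delicate part of the argument.
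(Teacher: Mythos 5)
Your proposal is correct and takes essentially the same route as the paper: the paper's own proof is a reduction to the ZO-SVRG analysis of \citet{zosvrg}, verifying that \Cref{lemma:unbiasness} and \Cref{lemma:orthogonal} slot into the corresponding steps (their Eq.~(36) and Lemmas 4--5) and that the conditioning $\mathbb{E} = \mathbb{E}_{z}\mathbb{E}_{\mathcal{I}|z} = \mathbb{E}_{z}\mathbb{E}_{\mathcal{I}}$ handles the single shared perturbation --- exactly the skeleton (descent lemma, smoothed surrogate, control-variate variance splitting with the $\mathbf{1}(b<n)/b$ floor, epoch-drift recursion, telescoping, and final substitution of $\mu$, $\eta$, $q$) that you reconstruct, including the stale-snapshot subtlety you flag as the main obstacle. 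The only presentational difference is that you work natively with the Gaussian-smoothed surrogate $f_\mu$, whereas the paper matches scales to the sphere-based \texttt{RandGradEst} by analyzing $d\,\bar{\nabla} f$ and scaling the learning rate down by $d$ at the end.
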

Theorem \ref{theorem:convergence} demonstrates a linear convergence, inverse proportional to iteration $T$ and $q$. The second term in Eq.~\eqref{eq:cor_1} expresses the effect of $\mu$, the magnitude of perturbation, which is small in practice. In Eq.~\eqref{eq:cor_2}, $q$ is proportional to the problem dimension $d$, which can balance overall computational cost. It also reveals the effect of batch size $b$, indicating larger batch sizes are preferred in terms of iteration counts, which coincide with our empirical observation.  
\vspace{0.1em}
\begin{remark}
The derivation of Theorem~\ref{theorem:convergence} heavily relies on mathematical machinery and flows of original SVRG \cite{johnson2013accelerating} and ZO-SVRG \cite{zosvrg}. However, note the gradient estimators in MeZO-SVRG and ZO-SVRG are different, e.g. different scaling, a single perturbation vector $\textbf{z}$ vs multiple perturbation vectors $\{\textbf{z}_i\}$ against \texttt{RandGradEst} \cite{zosvrg}. This requires careful examination often with different derivation like Lemma~\ref{lemma:unbiasness},~\ref{lemma:orthogonal}, while making sure random vector $\textbf{z}$ is conditioned consistently over sequence of derivations in \cite{zosvrg}. A proof sketch clarifying main distinct steps is provided in Appendix \ref{appendix:theory}. 
\end{remark}

\section{Related work}\label{section:rw}
\textbf{Zeroth-Order Optimization.} Zeroth-order (ZO) methods solve optimization problems without using gradient information. 
This class of methods typically estimates the gradient from function queries. Convergence theory has been developed for ZO stochastic gradient descent (ZO-SGD) in both convex \citep{jamiesonzoconvergence, raginsky,duchi} and non-convex settings \citep{zosvrg, zosvrgcoord, park2020structured}. However, these bounds generally depend on the number of parameters $d$. 
In \citet{malladi2023mezo}, authors demonstrate via fine-tuning experiments that after pre-training and the inclusion of task prompts, the loss landscape is well-behaved enough and can be traversed by ZO-SGD. \citet{zhang2024revisitingzo}  benchmarks the performance of ZO methods in the context of LM fine-tuning. However, despite various advances on variance-reduced techniques ~\cite{johnson2013accelerating, defazio2014saga, park2020linear, lu2021variance} within the FO class, to the best of our knowledge, we are the first to explore the direction of variance-reduced ZO optimization for fine-tuning LMs.  

\textbf{Memory-efficient Backpropagation Strategies.}
LLMs are typically fine-tuned by using FO methods such as SGD \citep{sgd} and Adam \citep{Kingma2015AdamAM}. Several methods have been proposed to handle the memory overheads of backpropagation, for e.g. sparsifying gradients \citep{pmlr-v70-sun17c, Wei2017MinimalEB} and quantizing gradients to lower bit precisions \citep{dettmers20228bit, dettmers2022gptint}. Other techniques to save activation memory during forward and backward pass include Gradient checkpointing \citep{chen2016training} and Flash Attention \citep{dao2022flashattention}. 

\textbf{Gradient-free Adaptation of LLMs.}
The pre-training stage gives LLMs the ability to generalize to tasks for which it has not been explicitly trained. This form of adaptation requires instruction prompts and is referred to as \textit{in-context learning} (ICL). 
While ICL enables quick adaptation of the model to specific tasks, drawbacks of this approach include that current models are constrained to limited context window and are sensitive to both the choice of input prompts and demonstrations \citep{malladi2023mezo}. Moreover, it has been empirically demonstrated that ICL on large models performs worse than full fine-tuning on medium-scale models \citep{llmsarefewshotlearners}. Another paradigm of adapting LLMs without using gradients is by using evolutionary algorithms \citet{sun2022bbt, sun2022bbtv2}, however the effectiveness of these methods has not been verified beyond smaller LMs. 

\section{Conclusion}
This work introduces MeZO-SVRG: a variance-reduced ZO method that addresses the challenge of fine-tuning LMs under memory constraints. MeZO-SVRG is a variant of ZO-SVRG that exploits in-place operations for memory-frugality and efficient gradient estimators that accommodate data parallelism for significant improvement in the iteration speed. The method combines fullbatch and minibatch information to yield low variance gradient estimators. We demonstrate empirically that MeZO-SVRG outperforms MeZO consistently on a variety of LM fine-tuning tasks, even in a challenging non-prompted setting, and requires significantly less GPU-hours to achieve this performance. Furthermore, we show that across model types and fine-tuning tasks, MeZO-SVRG is able to considerably close the performance gap to first-order methods while benefiting from a $2\times$ reduction in memory utilization, which progressively improves with larger batch sizes and context lengths.

We are excited to further explore the potential of MeZO-SVRG. In particular, we aim to examine MeZO-SVRG's performance when coupled with PEFT (LoRA, prefix-tuning) and settings where gradient-information is unavailable, e.g. prompt-tuning black-box models that are accessible only through an API. Finally, our work paves the way for exploring a broader spectrum of variance reduction techniques for ZO methods in the context of LM fine-tuning.


\section*{Impact}\label{ssection:impact}
One of the main challenges associated with adapting foundation models to specialized domains is the prohibitive computational burden during the fine-tuning phase. The high computational cost restricts the widespread use of larger models in resource-constrained settings. This deters a wider consumer-base from reaping the benefits of large models and, in turn, limits the democratization of the technology. Moreover, such constraints can be detrimental to the research community as the large-scale computational resources required for model adaptation are available to only a small fraction of researchers and users.
This challenge can be overcome by investigating optimization methods that leverage the memory frugality of inference passes to develop effective fine-tuning strategies.




\bibliography{example_paper}
\bibliographystyle{icml2024}

\newpage
\appendix
\onecolumn
\section{Proof of Theorem}\label{appendix:theory}
Throughout the proof, we drop bold notations $\boldsymbol{\theta}, \mathbf{z}, \mathbf{u} \rightarrow \theta, z, u$ for notational simplicity.

\begin{lemma}\label{lemma:unbiasness_appendix}
\begin{align}
    \mathbb{E}_\mathcal{I} \bar{\nabla} f_\mathcal{I}(\theta)
    = \bar{\nabla} f(\theta)    
\end{align}
\end{lemma}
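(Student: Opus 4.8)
The plan is to exploit that the single shared perturbation vector $z$ and the scalar prefactor $1/(2\mu)$ are common to every summand and do not depend on the random minibatch $\mathcal{I}$, so they pass straight through the expectation $\mathbb{E}_\mathcal{I}$. This collapses the claim to the elementary unbiasedness of a uniform minibatch average of scalar finite differences, which I would then verify separately for sampling with and without replacement.

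First I would introduce the scalar shorthand $g_i := f_i(\theta + \mu z) - f_i(\theta - \mu z)$, so that, directly from definitions \eqref{eq:minibatchspsa2} and \eqref{eq:fullbatchspsa2},
\[
\bar{\nabla} f_\mathcal{I}(\theta) = \frac{z}{2\mu}\cdot\frac{1}{b}\sum_{i\in\mathcal{I}} g_i,
\qquad
\bar{\nabla} f(\theta) = \frac{z}{2\mu}\cdot\frac{1}{n}\sum_{i=1}^n g_i.
\]
Since the expectation is taken solely over the draw of $\mathcal{I}$ with the direction $z$ held fixed, linearity of expectation lets the vector factor exit:
\[
\mathbb{E}_\mathcal{I}\,\bar{\nabla} f_\mathcal{I}(\theta)
= \frac{z}{2\mu}\,\mathbb{E}_\mathcal{I}\!\left[\frac{1}{b}\sum_{i\in\mathcal{I}} g_i\right].
\]
It therefore suffices to show $\mathbb{E}_\mathcal{I}\big[\tfrac{1}{b}\sum_{i\in\mathcal{I}} g_i\big] = \tfrac{1}{n}\sum_{i=1}^n g_i$, after which the identity $\mathbb{E}_\mathcal{I}\,\bar{\nabla} f_\mathcal{I}(\theta) = \bar{\nabla} f(\theta)$ follows immediately.

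Next I would dispatch the two sampling regimes. For sampling without replacement, by exchangeability every index satisfies $\mathbb{P}(i\in\mathcal{I}) = b/n$, so $\mathbb{E}_\mathcal{I}\big[\sum_{i\in\mathcal{I}} g_i\big] = \sum_{i=1}^n g_i\,\mathbb{P}(i\in\mathcal{I}) = \tfrac{b}{n}\sum_{i=1}^n g_i$, and dividing by $b$ gives the population mean. For sampling with replacement, writing $\mathcal{I} = \{i_1,\dots,i_b\}$ with i.i.d. uniform indices, each term obeys $\mathbb{E}[g_{i_j}] = \tfrac{1}{n}\sum_{i=1}^n g_i$, so averaging the $b$ identically distributed terms again recovers the population mean. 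Either way the required scalar identity holds, completing the proof.

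There is no genuine analytic obstacle here; the lemma is essentially the unbiasedness of a uniform subsample mean. The one point worth stating explicitly is that $\mathbb{E}_\mathcal{I}$ acts only on the randomness in $\mathcal{I}$ and not on $z$, which is exactly why reusing the same direction $z$ across the entire batch keeps the vector factor inert under the expectation. This is the structural feature distinguishing the data-parallel estimator $\bar{\nabla}$ from the per-sample estimator $\hat{\nabla}$, and I would flag it so that the conditioning on $z$ is treated consistently in the downstream variance arguments (e.g. Lemma~\ref{lemma:orthogonal}).
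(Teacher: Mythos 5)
Your proof is correct and follows essentially the same route as the paper's: pull the shared factor $\frac{z}{2\mu}$ out of $\mathbb{E}_\mathcal{I}$ and reduce to the unbiasedness of the uniform subsample mean, with the $\frac{b}{n}$ inclusion-probability computation you spell out being exactly the ``re-ordering'' step the paper invokes in one line. Your explicit treatment of both sampling regimes and the remark that $\mathbb{E}_\mathcal{I}$ conditions on $z$ merely make explicit what the paper leaves implicit.
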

\begin{proof}
\begin{align*}
    \mathbb{E}_\mathcal{I} \bar{\nabla} f_\mathcal{I}(\theta)
    &= 
    \frac{1}{b} \mathbb{E}_\mathcal{I} \sum_{i\in \mathcal{I}} \frac{f_i(\theta + \mu z) - f_i(\theta - \mu z) }{2\mu}z
    \\
    &=
    \frac{1}{b} \frac{b}{n} 
    \sum_{i=1}^n \frac{f_i(\theta + \mu z) - f_i(\theta - \mu z) }{2\mu}z
    \\
    &=  \bar{\nabla} f(\theta)
\end{align*}
The first and third equality comes from the definition of $\bar\nabla f_{\mathcal{I}}, \bar \nabla f$ and the second equality holds due to re-ordering under the assumption a minibatch set is sampled uniformly random or random with permutation. 
\end{proof}

We denote ${u}_\mathcal{I}= \bar{\nabla} f_\mathcal{I}({\theta}) - \bar{\nabla} f_\mathcal{I}({\theta}')
- \mathbb{E}_\mathcal{I} [\bar{\nabla} f_\mathcal{I}({\theta}) - \bar{\nabla} f_\mathcal{I}({\theta}')]
$ and ${u}_\mathcal{I} = {u}_i$ for $\mathcal{I} = \{i\}$. 

\begin{lemma}\label{lemma:orthogonal_appendix}
$\sum_{i=1}^n {u}_i = 0$ and $\mathbb{E}_\mathcal{I}[{u}_i {u}_j] =0$ where $i, j \in \mathcal{I}$ and $i\neq j$.  
\end{lemma}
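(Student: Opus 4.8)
The plan is to prove the two claims in Lemma~\ref{lemma:orthogonal_appendix} separately, both by direct computation from the definition of $u_i$. Recall that $u_i = \bar\nabla f_i(\theta) - \bar\nabla f_i(\theta') - \mathbb{E}_\mathcal{I}[\bar\nabla f_\mathcal{I}(\theta) - \bar\nabla f_\mathcal{I}(\theta')]$, where by Lemma~\ref{lemma:unbiasness_appendix} the expectation term equals $\bar\nabla f(\theta) - \bar\nabla f(\theta')$. The crucial structural fact I want to exploit is that all the single-sample estimators $\bar\nabla f_i$ share the \emph{same} perturbation direction $z$: writing $\delta_i := \tfrac{1}{2\mu}[f_i(\theta+\mu z) - f_i(\theta-\mu z)] - \tfrac{1}{2\mu}[f_i(\theta'+\mu z) - f_i(\theta'-\mu z)]$, we have $\bar\nabla f_i(\theta) - \bar\nabla f_i(\theta') = \delta_i\, z$, so each $u_i$ is a scalar multiple of the common vector $z$. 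This is the key distinction from the multi-direction \texttt{RandGradEst} estimator and is what makes the algebra clean.

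For the first claim, I would write $u_i = \big(\delta_i - \bar\delta\big) z$ where $\bar\delta := \tfrac{1}{n}\sum_{j=1}^n \delta_j$ (this is exactly the scalar form of the mean-subtraction term, since $\bar\nabla f(\theta) - \bar\nabla f(\theta') = \bar\delta\, z$). Summing over $i$ gives $\sum_{i=1}^n u_i = \big(\sum_{i=1}^n \delta_i - n\bar\delta\big) z = 0$ by the definition of $\bar\delta$. This is immediate and requires only recognizing the centering structure.

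For the second claim, I would compute $u_i u_j = (\delta_i - \bar\delta)(\delta_j - \bar\delta)\, z z^\top$ (or $z^\top z$ depending on whether the product is meant as an outer or inner product — I would match the convention used downstream in the ZO-SVRG variance bounds). Then I would take $\mathbb{E}_\mathcal{I}$ over the random draw of the minibatch. Conditioning on $z$, the randomness in the expectation is over which indices land in $\mathcal{I}$; since the $\delta_i$ are deterministic scalars once $z$ is fixed, the expectation reduces to averaging the products $(\delta_i-\bar\delta)(\delta_j-\bar\delta)$ over distinct ordered pairs $i\neq j$ drawn from $[n]$. The identity $\sum_{i\neq j}(\delta_i-\bar\delta)(\delta_j-\bar\delta) = \big(\sum_i (\delta_i-\bar\delta)\big)^2 - \sum_i (\delta_i-\bar\delta)^2$ and the fact that $\sum_i(\delta_i - \bar\delta) = 0$ show this average vanishes, giving $\mathbb{E}_\mathcal{I}[u_i u_j] = 0$.

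The main obstacle is bookkeeping rather than conceptual: I must be careful about exactly what $\mathbb{E}_\mathcal{I}[u_i u_j]$ means when $i,j$ are themselves random elements of $\mathcal{I}$, and about whether sampling is with or without replacement. The cleanest reading is that we fix two distinct slots in the minibatch and take expectation over their (exchangeable) values, in which case the pairwise average over distinct indices is what appears, and the centering identity above closes the argument; I would state the conditioning on $z$ explicitly so that the cross-term cancellation is transparent and consistent with how $z$ is conditioned throughout the derivations borrowed from \cite{zosvrg}.
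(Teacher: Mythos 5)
Your treatment of the first claim is correct and is essentially the paper's argument: because every samplewise estimator shares the same perturbation direction $z$, you can write $u_i = (\delta_i - \bar\delta)\,z$ and the sum over $i$ vanishes by centering. (The paper phrases this as $\sum_{i=1}^n \bar\nabla f_i(\theta) = n\,\bar\nabla f(\theta)$ combined with Lemma~\ref{lemma:unbiasness_appendix}, but it is the same observation.)

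The second claim, however, contains a genuine algebra error. Writing $c_i := \delta_i - \bar\delta$, the identity you invoke gives
\begin{align*}
\sum_{i\neq j} c_i c_j \;=\; \Bigl(\sum_i c_i\Bigr)^2 - \sum_i c_i^2 \;=\; -\sum_i c_i^2,
\end{align*}
since $\sum_i c_i = 0$. This is strictly negative unless all $\delta_i$ coincide: the centering kills the full square, not the cross-terms. So under your chosen reading --- two distinct exchangeable slots of the minibatch, i.e.\ sampling without replacement --- the expectation is $\mathbb{E}_\mathcal{I}[u_i u_j] = -\tfrac{1}{n(n-1)}\bigl(\sum_i c_i^2\bigr)\, z z^\top \neq 0$, and the lemma as stated would be false (it holds only up to an $O(1/n)$ correction). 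The paper's proof takes the other branch you mention but do not pursue: conditioned on $z$, the indices $i$ and $j$ are independent uniform draws (the with-replacement model), so the expectation factorizes as $\mathbb{E}_\mathcal{I}[u_i u_j] = \mathbb{E}[u_i]\,\mathbb{E}[u_j]$, and each factor vanishes because $\mathbb{E}[u_i] = \tfrac{1}{n}\sum_{i=1}^n u_i = 0$ by the first claim. The repair is small --- replace the pairwise-sum computation with this independence/factorization argument --- but as written your step ``this average vanishes'' does not go through, and the exact-zero conclusion genuinely requires the independent-index sampling model rather than the without-replacement one.
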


\begin{proof}
    By definition, $\sum_{i=1}^n \bar{\nabla} f_i (\theta) = n \bar \nabla f (\theta)$. It is immediate to see  $\sum_{i=1}^n \mathbb{E}_\mathcal{I} [\bar{\nabla} f_\mathcal{I}({\theta})] =  n \bar{\nabla} f({\theta})$, similar to Lemma~\ref{lemma:unbiasness_appendix}. Therefore
    $\sum_{i=1}^n {u}_i = 0$ holds.
    Conditioned on other randomness, e.g. perturbation $z$, 
    $\mathbb{E}_\mathcal{I}[{u}_i{u}_j] =0$ as $i, j$ are independent. 
\end{proof}

\textbf{Assumptions. } A1: Functions $\{f_i\}$ are L-smooth, i.e. 
$\|\nabla f_i(\theta) - \nabla f_i(\theta')\| \leq L \|\theta-\theta'\|^2_2$. 
A2: The variance of stochastic gradients is bounded as 
$\frac{1}{n} \sum_{i=1}^n \|f_i(\theta) - f(\theta')\|_2^2 \leq \sigma^2$.

Equipped with two Lemmas, the following holds 
\begin{theorem}\label{corollary:convergence_appendix}
Assume A1 and A2 holds. Let learning rate $\eta = \eta_1 = \eta_2$. Then, MeZO-SVRG satisfies
\begin{align}
    \mathbb{E}[\|\nabla f(\theta^{(T)})\|_2^2] 
    \leq 
    \frac{f(\theta^{(0)})-f^*}{T\bar \gamma}
    +
    \frac{L\mu^2}{T\bar \gamma}
    +
    \frac{c}{q\bar \gamma}
    \label{eq:cor_1_appendix}
\end{align}    
where $\bar \gamma$ and $c$ are the functions of stepsize $\eta$, dimension $d$, mini-batch size $b$ and $L, \sigma$. Moreover, 
by setting
\[
\mu = \frac{1}{\sqrt{dT}}, 
\quad \eta = \frac{\rho}{L}, 
\quad q = \lceil \frac{d}{31\rho} \rceil
\]
where $\rho$ is a universal constant,
MeZO-SVRG satisfies
\begin{align}
    \mathbb{E}[\|\nabla f(\theta^{(T)})\|_2^2] 
    = O\left( 
    \frac{d}{T} + \frac{\mathbf{1}(b<n)}{b}
    \right). \label{eq:cor_2_appendix}
\end{align}
\end{theorem}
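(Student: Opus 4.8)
The plan is to adapt the non-convex SVRG descent argument of \cite{johnson2013accelerating,zosvrg} to the zeroth-order setting, while carefully accounting for the single shared perturbation $z$ that distinguishes $\bar{\nabla}$ from the multi-perturbation \texttt{RandGradEst} of \cite{zosvrg}. First I would pass to the Gaussian-smoothed objective $f_\mu(\theta) := \mathbb{E}_z[f(\theta+\mu z)]$ and record that, for each fixed minibatch, the central finite-difference $\bar{\nabla}f_{\mathcal{I}}$ is an unbiased estimator of a smoothed gradient; combining this with the $L$-smoothness of A1 gives a bias of order $\mu^2$, which is the source of the $L\mu^2$ term in \eqref{eq:cor_1_appendix}. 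This step isolates the perturbation-induced error so that the remaining argument can treat $\bar{\nabla}f$ as an asymptotically unbiased estimator of $\nabla f$ (cf. Lemma~\ref{lemma:unbiasness_appendix}).

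Next I would bound the second moment of the estimators. Conditioning on the perturbation $z$, the minibatch difference $\bar{\nabla}f_{\mathcal{I}}(\theta)-\bar{\nabla}f_{\mathcal{I}}(\theta')$ is a uniform average of the samplewise terms, so Lemma~\ref{lemma:orthogonal_appendix} applies: the centered contributions $u_i$ sum to zero and are pairwise uncorrelated over the minibatch draw, which collapses the variance of the SVRG correction to a single-sample scale of order $\mathbf{1}(b<n)/b$ times the squared difference, controlled via A2. Integrating out $z$ afterwards injects the dimension factor $d$ through the $\mathbb{E}_z[\|z\|^2\langle\cdot,z\rangle^2]$-type moments of the SPSA estimator; this is precisely what makes the net descent constant scale with $d$ and ultimately produces the $d/T$ behavior in \eqref{eq:cor_2_appendix}.

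With these ingredients I would apply the descent lemma from A1 along the trajectory, handling the full-batch steps (every $q$ iterations, Line 3) and the variance-reduced minibatch steps (Lines 5--7) separately. The minibatch steps require controlling the inner-loop drift $\|\theta^{(t)}-\bar{\theta}\|^2$, which accumulates over the $q$ iterations of an epoch and feeds back into the variance of $\bar{\nabla}f_{\mathcal{I}}(\theta^{(t)})-\bar{\nabla}f_{\mathcal{I}}(\bar{\theta})$; bounding this drift and summing the per-step descent inequalities over $t=0,\dots,T$ telescopes $f(\theta^{(t)})-f^*$ and produces the net descent coefficient $\bar{\gamma}$ (which must be kept positive, pinning the admissible $\eta$ against $L$ and $d$) together with the residual-variance constant $c$ of order $\sigma^2\,\mathbf{1}(b<n)/b$. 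This yields \eqref{eq:cor_1_appendix}. Substituting $\mu=1/\sqrt{dT}$, $\eta=\rho/L$, and $q=\lceil d/(31\rho)\rceil$ then balances the three terms: the $L\mu^2$ contribution becomes negligible, $c/(q\bar{\gamma})$ reduces to the $\mathbf{1}(b<n)/b$ term, and the leading term becomes $O(d/T)$, giving \eqref{eq:cor_2_appendix}.

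The main obstacle I anticipate is the variance analysis under the single shared perturbation. In the original ZO-SVRG each sample carries its own $z_i$, so the samplewise estimators are independent and cross terms vanish automatically; here all samples share one $z$, so independence across samples is lost. The fix is to condition on $z$ throughout, using Lemma~\ref{lemma:orthogonal_appendix} to kill the cross terms through the independence of the minibatch \emph{indices} rather than of the perturbations, and only then take the expectation over $z$. Keeping this conditioning order consistent across the whole chain of inequalities --- in particular in the coupled correction term involving both $\theta^{(t)}$ and $\bar{\theta}$ --- is the delicate bookkeeping that separates this argument from the derivation in \cite{zosvrg}.
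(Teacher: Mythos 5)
Your proposal is correct and takes essentially the same route as the paper: the paper's proof likewise follows the ZO-SVRG analysis of \citet{zosvrg} step by step (smoothing bias of order $\mu^2$, second-moment bounds, inner-loop drift control, telescoped descent), substituting Lemmas~\ref{lemma:unbiasness_appendix} and~\ref{lemma:orthogonal_appendix} at the corresponding points and taking the expectation in the order $\mathbb{E}=\mathbb{E}_z\mathbb{E}_{\mathcal{I}|z}=\mathbb{E}_z\mathbb{E}_{\mathcal{I}}$ to handle the single shared perturbation --- precisely the conditioning-order fix you flag as the delicate step. The only cosmetic difference is that the paper absorbs the dimension dependence by matching the estimator to \texttt{RandGradEst} through the rescaling $d\bar{\nabla}f$ and then scaling the learning rate down by $d$, whereas you track the Gaussian moments $\mathbb{E}_z[\|z\|^2\langle\cdot,z\rangle^2]$ directly; these amount to the same bookkeeping.
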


\begin{proof}
    We rely on the proof provided by \citet{zosvrg}. Note that we need to make sure that certain important steps and Lemmas still hold under MeZO-SVRG's gradient estimators. We start by using $d \bar \nabla f$ as our gradient estimate, through which Lemma 1 and 2 (in in \cite{zosvrg}) hold by matching the scale of gradient to \texttt{RandGradEst} in \cite{zosvrg}. Lemma~\ref{lemma:unbiasness_appendix} is used for Eq. (36) (Proposition 1 of \citet{zosvrg}). Lemma~\ref{lemma:orthogonal_appendix} is used for Lemma 4, 5 in \cite{zosvrg}. Eq. (40) (Proposition 1 of \cite{zosvrg}) holds because of a different conditional expectation, i.e., $E = E_z E_{\mathcal{I}|z} = E_z E_{\mathcal{I}} $, rather than $E = E_{\{z_i\}} E_{\mathcal{I}|\{z_i\}} =E_{\{z_i\}} E_{\mathcal{I}} $ where $z$ and $\{z_i\}$ are random perturbations. The rest of proof follows through algebraic inequalities based on Lemmas 1,2, 4,5, and function assumptions, to derive convergence analysis. Finally we scale down learning rate $\eta$ by $d$ to adopt the gradient estimate of our definition.
\end{proof}

\section{Exploring the Limits of MeZO Empirically}\label{appendix:limitsmezo}
\subsection{MNIST classification and RoBERTa-large fine-tuning}
We ran experiments to better understand shortcomings in MeZO \citep{malladi2023mezo}. Two settings were considered: performing MNIST \citep{lecun1998gradient} classification with a two-layer MLP (25K parameters) and fine-tuning RoBERTa-large (350M parameters) on the SST-2 \citep{socher-etal-2013-recursivesst2} dataset. In the former, we used a two-layer feedforward network with 32 and 16 hidden units respectively. In the latter, we performed full-parameter fine-tuning. In \citet{malladi2023mezo}, authors also remark that a simple instruction prompt is needed for the algorithm to succeed in fine-tuning tasks, i.e. it requires a sufficiently well-behaved optimization trajectory. While this, in itself, can be noted as a drawback, we adopted their proposed prompts in the experiment \citep{malladi2023mezo}. The training and fine-tuning runs are illustrated in Figures \ref{fig:mnistbatch} and \ref{fig:robertabatch}. The hyperparameters selected for the runs are summarized in Tables \ref{tab:hyperparameters_batchsize} and \ref{tab:hyperparameters_batchsize2}. We paid particular attention to the effect of varying batch size on the algorithm performance. We also varied the perturbation scale $\mu$ used in the SPSA estimates \eqref{eq:minibatchspsa2}. No improvement was found in reducing $\mu$ from the default setting used in MeZO ($\mu=1e-3$) and thus we present results only for that configuration \citep{malladi2023mezo}. The largest learning rate values used in the grid search were selected for the MeZO runs. As an upper bound reference on performance, we also include the training curves for the FO-SGD algorithm. From both Figures \ref{fig:mnistbatch} and \ref{fig:robertabatch}, it is clear the MeZO has to contend with instability incurred at smaller batch sizes. 

\begin{table*}[ht]
\centering
\caption{The hyperparameter grid optimized over in the initial the small-scale MNIST \citep{lecun1998gradient} classification experiments.}
\label{tab:hyperparameters_batchsize}{\footnotesize
\begin{tabular}{@{}lll@{}}
\toprule
\textbf{Algorithm}    & \textbf{Hyperparameters} & \textbf{Values}        \\ \midrule
MeZO          & Batch size      & $\{32, 64, 128\} \times$ \\
              & Learning rate   & $\{1e-3, 1e-4\} \times$ \\
              & $\mu$   & $\{1e-3, 1e-4, 1e-5\}$ \\\midrule
FO-SGD          & Batch size      & $\{64\} \times$ \\
              & Learning rate  & $\{1e-3\}$ \\
              \bottomrule
\end{tabular}}
\end{table*}

\begin{table*}[ht]
\centering
\caption{The hyperparameter grid optimized over in the initial RoBERTa-large \citep{robertalarge} fine-tuning experiments.}
\label{tab:hyperparameters_batchsize2}{\footnotesize
\begin{tabular}{@{}lll@{}}
\toprule
\textbf{Algorithm}    & \textbf{Hyperparameters} & \textbf{Values}        \\ \midrule
MeZO          & Batch size      & $\{16, 32, 64\} \times$ \\
              & Learning rate   & $\{1e-5, 1e-6\} \times$ \\
              & $\mu$   & $\{1e-3, 1e-4, 1e-5\}$ \\\midrule
FO-SGD          & Batch size      & $\{64\} \times$ \\
              & Learning rate  & $\{1e-5\}$ \\
              \bottomrule
\end{tabular}}
\end{table*}

\subsection{Solving Least Squares}
To make the aforementioned observations even more apparent, we examined the performance of MeZO on a simple linear least-squares (LS) problem. Specifically we solve 
    \begin{align}\label{eq:ls}
        \min_{w\in\R^d} \|Xw-y\|_2^2,
    \end{align}
where $X\in\R^{n\times d}$ is a randomly generated matrix, $w\in\R^d$ is fixed a priori, and $y\in\R^n = Xw +\textrm{noise}$ is the target labels. In our experiment, we focus on the 100-dimensional problem, i.e. with $d=100$ and $n=1000$. For comparison, we also report the performances of our proposed MeZO-SVRG and FO-SGD. The hyperparameter configurations used are presented in Table \ref{tab:hyperparameters_ls}. Figure \ref{fig:leastsquares} makes it clear that MeZO is unable to attain the optimal value and yields a performance gap w.r.t. MeZO-SVRG and FO-SGD.

\begin{table*}[ht]
\caption{The hyperparameters used for the Least Squares (LS) convergence experiment.}
\centering
\label{tab:hyperparameters_ls}{\footnotesize
\begin{tabular}{@{}lll@{}}
\toprule
\textbf{Algorithm}    & \textbf{Hyperparameters} & \textbf{Values}        \\ \midrule
MeZO          & Batch size      & $\{32\} \times$ \\
              & Learning rate   & $\{1e-3\} \times$ \\
              & $\mu$   & $\{1e-3\}$ \\\midrule
MeZO-SVRG          & Batch size      & $\{32\} \times$ \\
              & Learning rate ($\eta_1$)   & $\{1e-3\} \times$ \\
              & Learning rate ($\eta_2$)   & $\{1e-4\}\times$\\
              & $\mu$   & $\{1e-3\}\times$\\
              & $q$ & \{2\}\\\midrule
FO-SGD          & Batch size      & $\{32\} \times$ \\
              & Learning rate  & $\{1e-3\}$ \\
              \bottomrule
\end{tabular}}
\end{table*}

\newpage
\section{Zeroth-Order Stochastic Variance-Reduced Gradient}\label{appendix:zosvrg}
For the sake of completeness, we present the ZO-SVRG algorithm proposed in~\cite{zosvrg}. This algorithm was proposed without a focus on memory efficiency, in contrast to our MeZO-SVRG, which offers significant memory-saving advantages, particularly in the context of fine-tuning large-scale LMs.
\begin{algorithm}[h]
   \caption{ZO-SVRG \cite{zosvrg}
   }
   \label{alg:method1}
   \footnotesize
\begin{algorithmic}
    \STATE {\bfseries Input:} Total iterations $T$, learning rate $\eta>0$, minibatch size $b$, parameters $\boldsymbol{\theta}_0$, iterations between fullbatch estimators $q\in\mathbb{N}$ 
   \STATE {\bfseries begin method}
    \FOR{$t = 0, \dots, T$}
   \IF {$t \mod q = 0$}
   \STATE 1. $\textbf{g} \leftarrow \hat{\nabla}f(\boldsymbol{\theta}^{(t)})$
   \STATE 2. $\bar{\boldsymbol{\theta}} \leftarrow \boldsymbol{\theta}^{(t)}$
   \ENDIF{}
   \STATE 3. Choose mini-batch $\mathcal{I}_t$ of size b
   \STATE 4. $\hat{\textbf{g}}\leftarrow\hat{\nabla}f_{\mathcal{I}_t}(\boldsymbol{\theta}^{(t)})$
   \STATE 5. $\bar{\textbf{g}}\leftarrow\hat{\nabla}f_{\mathcal{I}_t}(\bar{\boldsymbol{\theta}})$
   \STATE 6. Compute gradient blending: $\textbf{v}_t \leftarrow \hat{\textbf{g}} - \bar{\textbf{g}} + \textbf{g}$
   \STATE 7. update: $\boldsymbol{\theta}^{(t+1)} \leftarrow \boldsymbol{\theta}^{(t)} - \eta \textbf{v}^{(t)}$
   \ENDFOR{}
   \STATE {\bfseries end} 
\end{algorithmic}
\end{algorithm}

\newpage
\section{Experiment Setup}\label{appendix:experimentsetup}
\subsection{Datasets}
For experiments on LMs, we considered fine-tuning on classification datasets. Specifically, we focused on the following datasets from the General Language Understanding Evaluation (GLUE) \citep{wang-etal-2018-glue} benchmark: Multi-Genre Natural Language Inference (MNLI) \citep{williams2018broadmnli}, Question Natural Language Inference (QNLI) \citep{wang-etal-2018-glue} for sentence pair classification, Stanford Sentiment Treebank (SST-2) \citep{socher-etal-2013-recursivesst2} for sentiment analysis, and Corpus of Linguistic Acceptability (CoLA) \citep{warstadt2018neuralcola}. To incorporate a more challenging task, we also evaluated on the BoolQ dataset from the SuperGLUE \citep{superglue} benchmark.

The datasets are imported from the Huggingface \texttt{datasets} library. We randomly sampled 512 examples for training, 256 for validation and 256 for testing. 

\subsection{Model}
In our implementation, we used models from the Huggingface \texttt{transformers} package. As we considered classification datasets, we instantiated models from the \texttt{AutoModelsForSequenceClassification} and \texttt{OPTModelsForSequenceClassification} classes. These libraries add a classification head on top of the considered pre-trained model. For the prompted experiment setting, we instantiate from the \texttt{RobertaModelForPromptFinetuning} custom class implemented in the MeZO repository \citep{malladi2023mezo}.

Tables \ref{tab:modelinfo} and \ref{tab:modelinfo2} summarize the models that where considered in our experiments. For the masked models both full- and partial parameter fine-tuning was performed.

\begin{table*}[h!]
\centering
{\scriptsize
\begin{tabular}{l c c c}
\toprule
\textbf{Model} & Total Trainable Parameters ($\times 10^6$)  & Partial Fine-tuning Layers & Partial Fine-tuning Nr. of Parameters ($\times 10^6$) \\
\midrule
DistilBert (\texttt{distilbert-base-cased}) & 66 & $\begin{bmatrix}
    \texttt{transformer.layer.5}\\\texttt{classifier}
\end{bmatrix}$ & 8 \\
\midrule
RoBERTa-large (\texttt{roberta-large}) & 355 & $\begin{bmatrix}
    \texttt{roberta.encoder.layer.20}\\
    \texttt{roberta.encoder.layer.21}\\
    \texttt{roberta.encoder.layer.22}\\\texttt{roberta.encoder.layer.23}\\
    \texttt{classifier}
\end{bmatrix}$ & 38\\
\bottomrule
\end{tabular}}
\caption{An overview of the masked LMs used in the experiments. Both full- and partial-parameter fine-tuning was considered for these LLMs.}
\label{tab:modelinfo}
\end{table*}

\begin{table*}[h!]
\centering
{\scriptsize
\begin{tabular}{l c}
\toprule
\textbf{Model} & Total Trainable Parameters ($\times 10^6$)  \\
\midrule
GPT2 (\texttt{gpt2-xl}) & 1557 \\
\midrule
OPT-2.7B (\texttt{facebook/opt-2.7B}) & 2651 \\
\midrule
OPT-6.7B (\texttt{facebook/opt-6.7B}) & 6658 \\
\bottomrule
\end{tabular}}
\caption{An overview of the autoregressive LMs used in the experiments.}
\label{tab:modelinfo2}
\end{table*}

\newpage
\section{MeZO-SVRG Implementation and Ablations}\label{appendix:ablations}
\subsection{Memory-efficient SPSA}\label{appendix:spsaimplementation}
In our implementation we adopt the memory-efficient strategy of computing the SPSA estimator as proposed in \citet{malladi2023mezo}. Rather than sampling and storing the entire perturbation vector $\textbf{z}\in\R^d$, we sample a random seed and use it to regenerate the random vector when required. This allows in-place perturbations of the optimization parameters which minimizes the memory footprint. The memory-efficient perturbation routine is shown in \ref{alg:memoryeffparameter}. The parameters are perturbed in groups rather than individually, i.e. in Algorithm \ref{alg:memoryeffparameter}, each $\theta_i$ denotes a parameter group (e.g. an entire weight matrix). The scaling factor $s\in \{1, -2\}$ is used to perturb the parameters in a forward and backward direction as required in central difference approximations.

\begin{algorithm}[H]
   \caption{Memory-Efficient Parameter Perturbation}
   \label{alg:memoryeffparameter}
   \footnotesize
\begin{algorithmic}
    \STATE {\bfseries Design choices:} Scaling factor $s\in\{1, -2\}$, perturbation size $\mu$ 
   \STATE {\bfseries Input:} Parameters $\boldsymbol{\theta}$, random seed $r$
   \STATE {\bfseries Return:} Updated parameters $\boldsymbol{\theta}$
   \vspace*{\baselineskip}
   \STATE {\bfseries begin method}
   \STATE 1. Set random seed $r$
    \FOR{$\theta_i\in \boldsymbol{\theta}$}
   \STATE 2. $z_i\sim\mathcal{N}(0,1)$
   \STATE 3. $\theta_i \leftarrow \theta_i + s*z_i*\mu$
   \ENDFOR{}
   \STATE {\bfseries end} 
\end{algorithmic}
\end{algorithm}

In this work, experiments were conducted with single SPSA estimators which require exactly 2 forward passes. In $p$-SPSA, $p$ estimators are computed and averaged. A total of $2p$ forward passes are required to compute a $p$-SPSA estimator. We used the default setting of $p=1$ suggested in \citet{malladi2023mezo} for both MeZO and MeZO-SVRG implementations.

\subsection{Role of the Perturbation Parameter}\label{appendix:perturbation}
We investigated the role of the perturbation parameter $\mu$ in MeZO-SVRG. Recall that $\mu$ defines the forward and backward perturbation scale when computing SPSA estimators \eqref{eq:minibatchspsa2} and \eqref{eq:fullbatchspsa2}. We know from \citet{spsa} that the SPSA estimator is asymptotically unbiased as $\mu\rightarrow 0$. We wanted to see the practical effects of different $\mu$ settings for MeZO-SVRG. Thus we carried out an ablation study where the perturbation parameter was varied. We fine-tune DistilBert \citep{sanh2020distilbert} on the MNLI \citep{williams2018broadmnli} dataset. The experiment settings are summarized in Figure \ref{tab:perturbationscale}.

Figure \ref{fig:perturbationscale} shows how the different values of $\mu$ affected the fine-tuning process of the MeZO-SVRG algorithm. We observe that for a sufficiently small values of $\mu$ (i.e. smaller than $1e-1$) we see no noticeable difference in performance, while larger $\mu$ result in diverging behaviour. Similar findings were also empirically corroborated in \citet{malladi2023mezo}. Thus, throughout our work we used the default value of $\mu=1e-3$.

\begin{figure*}[h]
    \centering
    \begin{minipage}{0.5\columnwidth}
    \centering
        \includegraphics[width=0.8\linewidth]{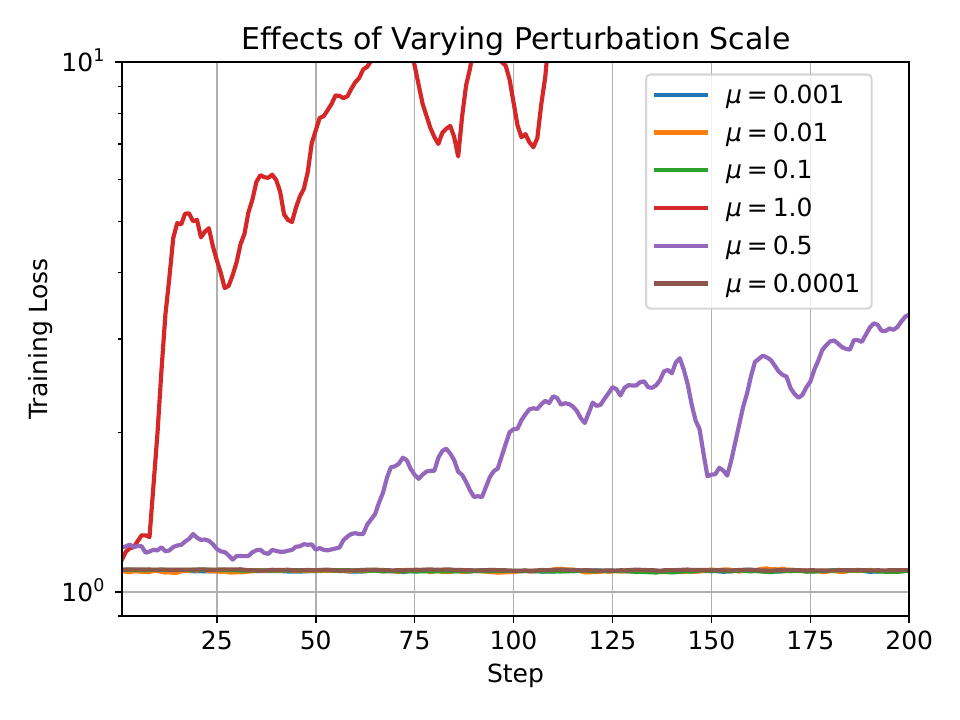}
        \subcaption{}
        \label{fig:perturbationscale}
    \end{minipage}
    \hfill
    \begin{minipage}{0.49\columnwidth}
    \centering
    {\scriptsize
            \begin{tabular}{@{}lll@{}}
            \toprule
            \textbf{Algorithm}    & \textbf{Hyperparameters} & \textbf{Values}        \\ \midrule
            MeZO-SVRG          & Batch size      & $\{64\} \times$ \\
              & Learning rate ($\eta_1$)   & $\{1e-4\} \times$ \\
              & Learning rate ($\eta_2$)   & $\{1e-6\} \times$ \\
              & $\mu$   & $\{1, 0.5, 1e-1, 1e-2, 1e-3, 1e-4\}\times$ \\
              & $q$ & $\{2\}\times$\\
              & Total Steps   & $\{200\}$\\ 
              \bottomrule
            \end{tabular}}
            \subcaption{}
            \label{tab:perturbationscale}
    \end{minipage}
    \caption{a) Shows the effects of varying the perturbation scale on the performance of MeZO-SVRG. b) Shows the hyperparameter settings used in this experiment.}
\end{figure*}

\subsection{Role of $q$}\label{appendix:q}
The parameter $q$ plays a significant role in the performance of MeZO-SVRG (Algorithm \ref{alg:method2}). Concretely, $q$ determines the frequency of fullbatch update steps in the algorithm: smaller $q$ increases the regularity of fullbatch updates. We perform an ablation to better understand the extent to which fullbatch updates help or hinder the MeZO-SVRG performance. We consider the task of fine-tuning the DistilBert \citep{sanh2020distilbert} model on the MNLI \citep{williams2018broadmnli} dataset. The experiment setup is summarized in Figure \ref{tab:varyq}.

Figure \ref{fig:varyq} shows the training curves of MeZO-SVRG for different settings of $q$ over 3500 steps. Increasing the frequency of fullbatch update steps enhances the convergence rate. However, our findings also indicate that a combination of fullbatch and minibatch updates (with $q \geq 2$) contributes to a more stable algorithm performance compared to exclusively using fullbatch updates (when $q=1$).

\begin{figure*}[h]
    \centering
    \begin{minipage}{0.5\columnwidth}
    \centering
        \includegraphics[width=0.8\linewidth]{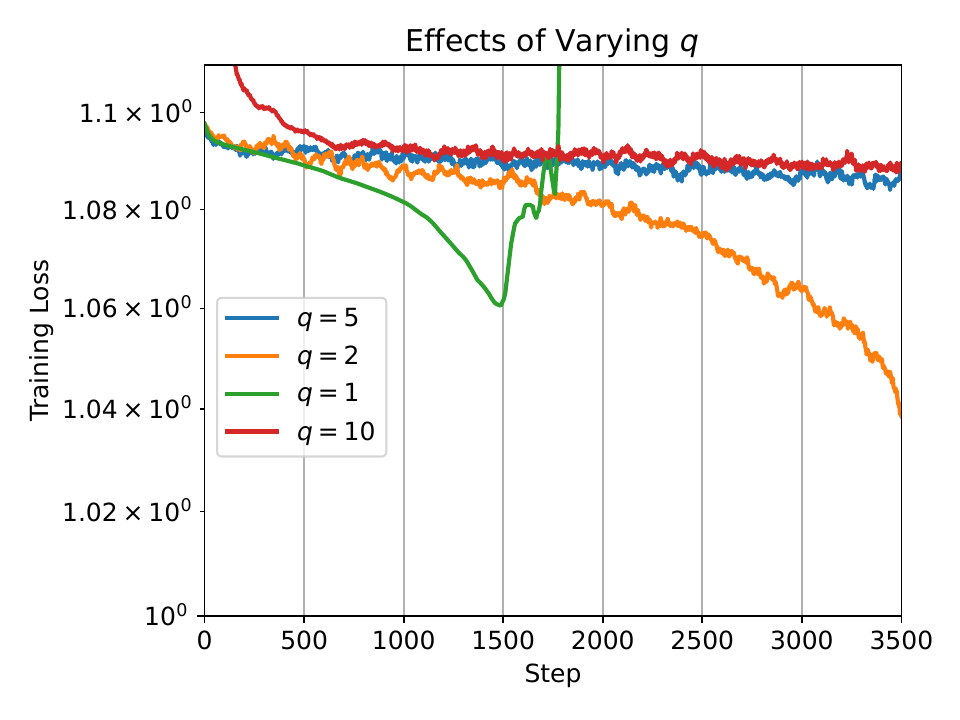}
        \subcaption{}
        \label{fig:varyq}
    \end{minipage}
    \hfill
    \begin{minipage}{0.49\columnwidth}
    \centering
    {\scriptsize
            \begin{tabular}{@{}lll@{}}
            \toprule
            \textbf{Algorithm}    & \textbf{Hyperparameters} & \textbf{Values}        \\ \midrule
            MeZO-SVRG          & Batch size      & $\{64\} \times$ \\
              & Learning rate ($\eta_1$)   & $\{1e-4\} \times$ \\
              & Learning rate ($\eta_2$)   & $\{1e-6\} \times$ \\
              & $\mu$   & $\{1e-3\}\times$ \\
              & $q$ & $\{1, 2, 5, 10\}\times$\\
              & Total Steps   & $\{3500\}$\\ 
              \bottomrule
            \end{tabular}}
            \subcaption{}
            \label{tab:varyq}
    \end{minipage}
    \caption{a) Shows the effects of varying $q$ on the convergence performance MeZO-SVRG. b) Shows the hyperparameter settings used in this experiment.}
\end{figure*}

\subsection{Improved Robustness to Batch Size}
In Figures \ref{fig:leastsquares}, \ref{fig:mnistbatch} and \ref{fig:robertabatch} we emphasize one of the practical drawbacks of MeZO with respect to instability with small batch sizes. We saw this behavior even in the more benign prompted setting. In Figure \ref{fig:bsmezosvrg}, we compare the behavior of MeZO-SVRG and MeZO when fine-tuning RoBERTa-large \citep{robertalarge} on the SST-2 dataset in the prompt-free setting. The plot showcases MeZO-SVRG’s advantage as a low-variance method with improved robustness to different batch sizes. In particular, MeZO's tendencies of diverging with smaller batch sizes are mitigated by MeZO-SVRG. Note that this improvement already becomes apparent over the first 100 iterations of fine-tuning.

\begin{figure}[h]
    \centering
    \includegraphics[width=0.5\columnwidth]{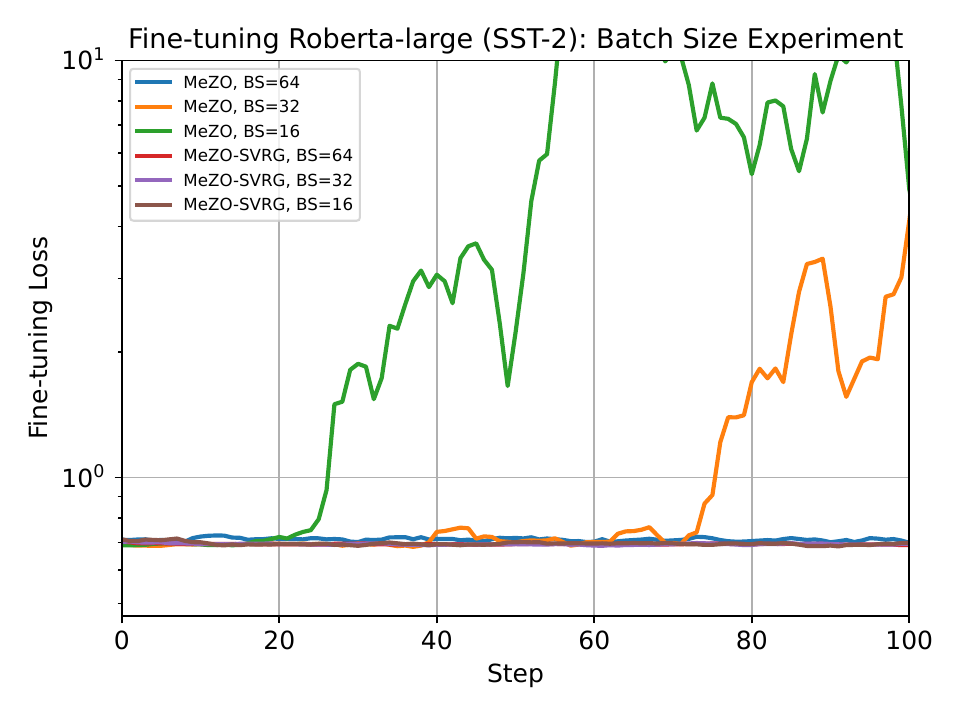}
    \vspace{-1em}
    \caption{Shows improved robustness to smaller batch sizes for MeZO-SVRG compared to MeZO when fine-tuning RoBERTa-large on the SST-2 dataset.}
    \label{fig:bsmezosvrg}
\end{figure}

\subsection{Approximating Fullbatch Estimators with Large Batches}
For sufficiently large training datasets, estimating the fullbatch gradient estimator is prohibitive and time-consuming. Thus we carry out an ablation study to see the effects on the MeZO-SVRG performance when approximating the fullbatch gradient estimator with a large-batch estimator. Specifically, we carry out partial-parameter fine-tuning of DistilBert on a training set of 512 samples for 8000 steps. We choose a mini-batch size of 64 which is consistent across experiment runs. This ablation study is carried out in the half-precision (BF16) setting. We approximate the fullbatch (512 samples) with large batch sizes of 256 and 128. The fine-tuning performances are summarized in Table \ref{tab:ftlargebatchhp}. The obtained results are comparable, suggesting that the large batch-based gradient estimation offers a viable approximation of the fullbatch gradient estimator. 
\begin{table*}[h!]
\centering
\caption{Performance of partial-parameter fine-tuning of DistilBert with half-precision when approximating the fullbatch with large batch sizes. Partial FT refers to partial-parameter fine-tuning (see Appendix \ref{appendix:experimentsetup} for details).}
{\footnotesize
\begin{tabularx}{0.95\linewidth}{l l c c c}
\toprule
\textbf{Task} & \textbf{Method} & \textbf{Fine-tuning Loss} $\downarrow$ & \textbf{Test Accuracy} (\%)$\uparrow$ & \textbf{Queries} ($\times10^3$) $\downarrow$ \\ 
\midrule
SST-2 (Full FT)& MeZO-SVRG (fullbatch$=512$) & 0.4393 & 70 & 2560 \\
 & MeZO-SVRG (Large batch$=256$) & 0.4946 & 71 & 1536 \\
 & MeZO-SVRG (Large batch$=128$)  & 0.5502 & 69 & 1024\\
\bottomrule
\end{tabularx}}

\label{tab:ftlargebatchhp}
\end{table*}

\subsection{Learning Rate Scheduling}
In our implementation, we couple the MeZO-SVRG method with a basic learning rate annealing schedule. This schedule is shown in Algorithm \ref{alg:lrschedule}. This scheduling scheme operates on feedback from training loss values. We compute the average loss values in consecutive epochs. If an increasing trend of average losses is observed, the learning rates are annealed with a factor of $\alpha$. Specifically, if the ratio of leading and trailing average losses is above threshold $\kappa$, we anneal the learning rates. In our experiments we set $\kappa=1.05$ and annealing factor $\alpha=5$.
\begin{algorithm}[H]
   \caption{Learning Rate Scheduling for MeZO-SVRG}
   \label{alg:lrschedule}
   \footnotesize
\begin{algorithmic}
    \STATE {\bfseries Input:} Learning rates $\eta_1, \eta_2$, annealing factor $\alpha$, losses $L$, annealing threshold $\kappa$, total number of batches in an epoch $w$ 
   \STATE {\bfseries begin method}
   \STATE 1. $m_1 \leftarrow \texttt{mean}(L[-w, :])$
   \STATE 2. $m_2 \leftarrow \texttt{mean}(L[-2w, -w])$
   \IF {$\frac{m_1}{m_2}>\kappa$}
   \STATE 3. $\eta_1 \leftarrow \frac{\eta_1}{\alpha}$, $\eta_2 \leftarrow \frac{\eta_2}{\alpha}$
   \ENDIF{}
   \STATE {\bfseries end} 
\end{algorithmic}
\end{algorithm}

\newpage
\section{Fine-tuning DistilBert}\label{appendix:distilbert}
\subsection{Hyperparameter Selection}\label{appendix:distilberthyperparameter}
Table \ref{tab:hyperparametersdistilbert} shows the hyperparameter grid optimized over in the DistilBert \citep{sanh2020distilbert} experiment. The hyperparameter search was done by running the different algorithms for $1$K steps on the MNLI \citep{williams2018broadmnli} dataset and selecting the best configuration. The chosen configuration was then used for a longer fine-tuning runs for all considered tasks, i.e. $200$K steps for MeZO and $50$K steps for MeZO-SVRG. 
\begin{table*}[h]
\centering
\caption{The hyperparameter grid optimized over for the DistilBert \citep{sanh2020distilbert} experiments. In the case of MeZO-SVRG we use the learning rate schedule proposed in Algorithm \ref{alg:lrschedule}. The bold values indicate the configuration used to generate the final results.}
\label{tab:hyperparametersdistilbert}{\footnotesize
\begin{tabular}{@{}lll@{}}
\toprule
\textbf{Algorithm}    & \textbf{Hyperparameters} & \textbf{Values}        \\ \midrule
MeZO          & Batch size      & $\{32, \textbf{64}\} \times$ \\
              & Learning rate   & $\{1e{-4},1e{-5}, \mathbf{1e{-6}}\} \times$ \\
              & $\mu$   & $\{\mathbf{1e{-3}}\}\times$ \\
              & Total Steps   & $\{\mathbf{200K}\}$ \\\midrule
MeZO-SVRG          & Batch size      & $\{32, \mathbf{64}\} \times$ \\
              & Learning rate ($\eta_1$)   & $\{\mathbf{1e{-3}}, 1e{-4}\} \times$ \\
              & Learning rate ($\eta_2$)   & $\{1e{-5}, \mathbf{1e{-6}}\} \times$ \\
              & $\mu$   & $\{\mathbf{1e{-3}}\}\times$ \\
              & $q$ & $\{\mathbf{2}, 5, 10\}\times$\\
              & Total Steps   & $\{\mathbf{50K}\}$ \\\midrule
              FO-SGD          & Batch size      & $\{32, \mathbf{64}\} \times$ \\
              & Learning rate   & $\{1e{-2},\mathbf{1e{-3}}, 1e{-4}\} \times$ \\
              & Total Steps   & $\{\mathbf{1K}\}$ \\
              \bottomrule
\end{tabular}}
\end{table*}

\subsection{Convergence Performance}\label{appendix:convergencedistilbert}
We fine-tune Distilbert \citep{sanh2020distilbert} on the SST-2 \citep{socher-etal-2013-recursivesst2} dataset. In Figure \ref{fig:convergencedistilbertquery}, we show the improved convergence performance of MeZO-SVRG over MeZO. MeZO-SVRG is able to significantly reduce the convergence gap compared to the FO-SGD baseline. Figure \ref{fig:accuracydistilberttime} shows the evolution of the test accuracy over time. Observe that MeZO-SVRG achieves a significant improvement over MeZO in test performance. Moreover, MeZO-SVRG surpasses the peak test accuracy achieved by MeZO in over an order of magnitude less time.

\begin{figure*}[h]
    \centering
    \begin{minipage}{0.49\columnwidth}
    \centering
        \includegraphics[width=0.8\linewidth]{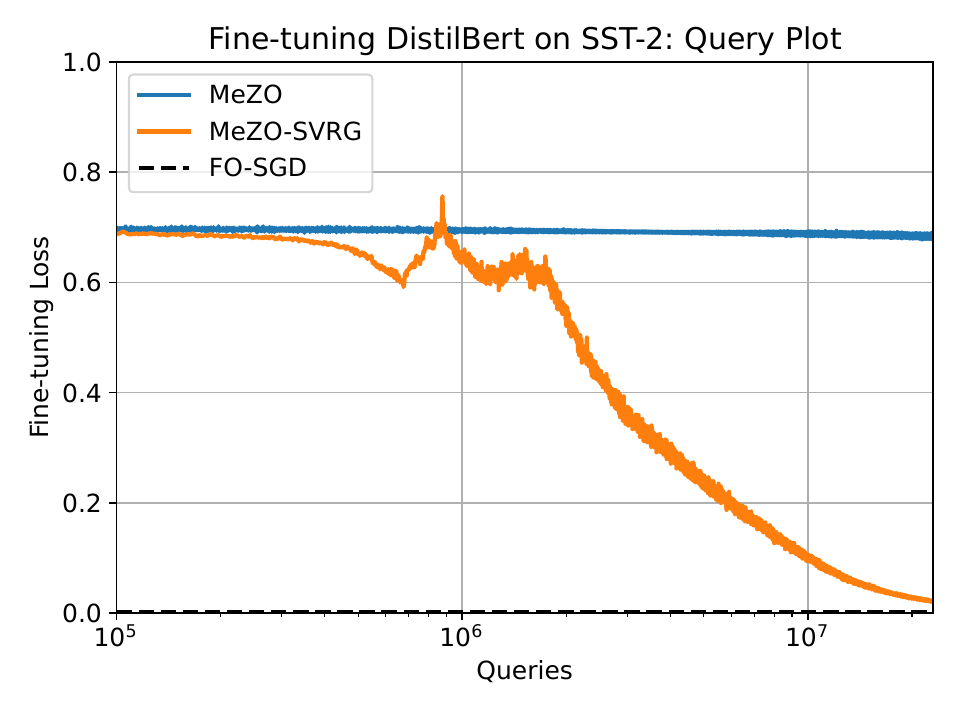}
        \subcaption{}
        \label{fig:convergencedistilbertquery}
    \end{minipage}
    \hfill
    \begin{minipage}{0.49\columnwidth}
    \centering
        \includegraphics[width=0.8\linewidth]{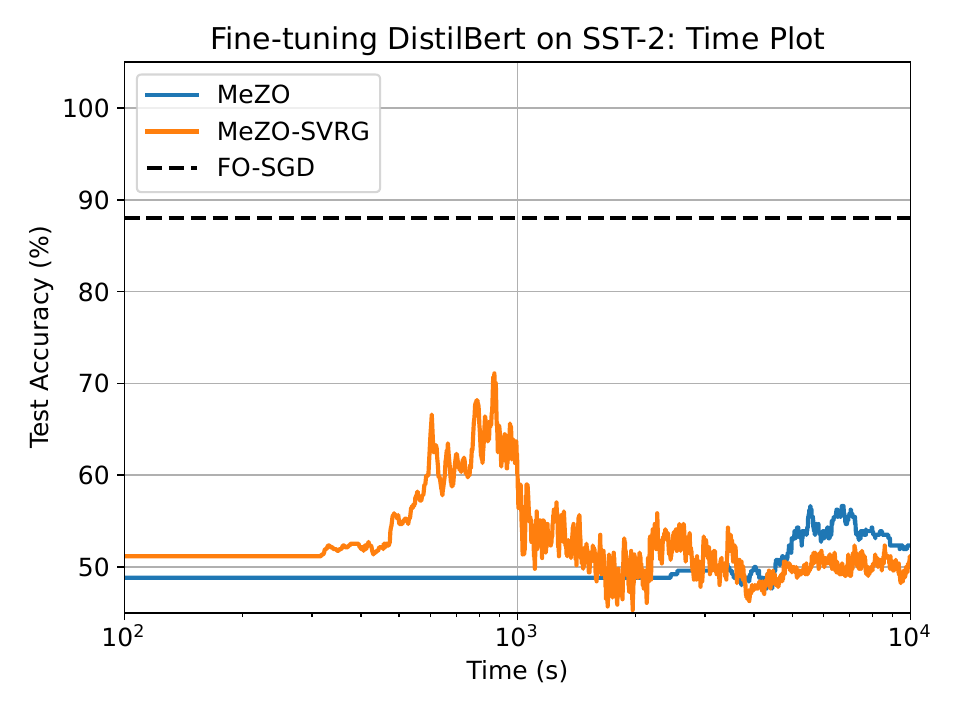}
        \subcaption{}
        \label{fig:accuracydistilberttime}
    \end{minipage}
    \caption{Performance of MeZO-SVRG and MeZO when fine-tuning Distilbert \citep{sanh2020distilbert} on the SST-2 \citep{socher-etal-2013-recursivesst2} dataset. The dashed line serves as a reference to the training loss/test accuracy achieved by FO-SGD. (a) MeZO-SVRG is able to significantly reduce the convergence gap to FO-SGD compared to MeZO. (b) MeZO-SVRG surpasses the peak test performance of MeZO in an order of magnitude less time.}
\end{figure*}

\subsection{Additional Results}\label{appendix:resultsdistilbert}
\begin{table*}[h!]
\centering
\caption{Experiments on DistilBERT (with 512 fine-tuning examples). FO refers to first-order methods. Full FT refers to full-parameter fine-tuning and Partial FT refers to partial-parameter fine-tuning (see Appendix \ref{appendix:experimentsetup} for details).}
{\footnotesize
\begin{tabularx}{0.8\linewidth}{l l c c c}
\toprule
\textbf{Task} & \textbf{Method} & \textbf{Fine-tuning Loss} $\downarrow$ & \textbf{Test Accuracy} (\%)$\uparrow$ & \textbf{Queries} ($\times10^3$) $\downarrow$ \\ 
\midrule
 MNLI (Full FT)& MeZO  & 1.0908 & 36 & 25600\\
 & MeZO-SVRG  & \textbf{0.0757} & \textbf{46} & 25600 \\
  \hdashline
 & FO-SGD  & 0.0101  & 59 & 64 \\
 \midrule
 MNLI (Partial FT)& MeZO & 1.0925 & 35 & 25600 \\
 & MeZO-SVRG & \textbf{0.2775} & \textbf{47} & 25600 \\
 \hdashline
 & FO-SGD  & 0.2617 & 48 & 64 \\
\midrule
 QNLI (Full FT)& MeZO  & 0.6914 & 50 & 25600 \\
 & MeZO-SVRG  & \textbf{0.2335} & \textbf{68} & 25600 \\
 \hdashline
 & FO-SGD  & 0.0372 & 78 & 64 \\
 \midrule
 QNLI (Partial FT)& MeZO & 0.6929 & 52 & 25600 \\
 & MeZO-SVRG  & \textbf{0.2925} & \textbf{65} & 25600 \\
 \hdashline
 & FO-SGD  & 0.4176 & 59 & 64 \\
\midrule
SST-2 (Full FT)& MeZO & 0.6822 & 52 & 25600  \\
 & MeZO-SVRG & \textbf{0.0203} & \textbf{72} & 25600  \\
 \hdashline
 & FO-SGD & 0.0121 & 88 & 64 \\
 \midrule
 SST-2 (Partial FT)& MeZO & 0.6990 & 51 & 25600  \\
  & MeZO-SVRG & \textbf{0.1034} & \textbf{74} & 25600 \\
 \hdashline
 & FO-SGD & 0.0507 & 85 & 64  \\
\midrule
 CoLA (Full FT)& MeZO  & 0.6408 & 62 & 25600 \\
 & MeZO-SVRG  & \textbf{0.2807} & \textbf{68} & 25600 \\

 \hdashline
 & FO-SGD  & 0.0159 & 70 & 64 \\
 \midrule
 CoLA (Partial FT) & MeZO  & 0.6422 & 60 & 25600 \\
  & MeZO-SVRG  & \textbf{0.3617} & \textbf{67} & 25600 \\
 \hdashline
 & FO-SGD  & 0.44719 & 66 & 64 \\
\bottomrule
\end{tabularx}}

\label{tab:distilbertresultslong}
\end{table*}

\newpage
\section{Fine-tuning RoBERTa-large}\label{appendix:robertalarge}

\subsection{Hyperparameter selection}\label{appendix:robertalargehyperparameter}
Table \ref{tab:hyperparametersroberta} presents the hyperparameters searched over in our RoBERTa-large \citep{robertalarge} experiment. The hyperparameter search was done by fine-tuning the model on the MNLI \citep{williams2018broadmnli} dataset for $1$K steps and selecting the best configuration. This selected configuration was subsequently applied to extended fine-tuning sessions across all considered tasks. For our final results, MeZO-SVRG was run for $24$K steps and MeZO was run for $96$K steps.
\begin{table*}[h]
\centering
\caption{The hyperparameter grid optimized over for the RoBERTa-large \citep{robertalarge} experiments. In the case of ZO-SVRG we use the learning rate schedule proposed in Algorithm \ref{alg:lrschedule}. The bold values indicate the configuration used to generate the final results.}
\label{tab:hyperparametersroberta}{\footnotesize
\begin{tabular}{@{}lll@{}}
\toprule
\textbf{Algorithm}    & \textbf{Hyperparameters} & \textbf{Values}        \\ \midrule
MeZO          & Batch size      & $\{32, \mathbf{64}\} \times$ \\
              & Learning rate   & $\{1e{-4},1e{-5}, \mathbf{1e{-6}}\} \times$ \\
              & $\mu$   & $\{\mathbf{1e{-3}}\}\times$ \\
              & Total Steps   & $\{\mathbf{96K}\}$ \\\midrule
MeZO-SVRG          & Batch size      & $\{32, \mathbf{64}\} \times$ \\
              & Learning rate ($\eta_1$)   & $\{1e{-4}, 5e{-5}, \mathbf{1e{-5}}\} \times$ \\
              & Learning rate ($\eta_2$)   & $\{1e{-5}, \mathbf{1e{-6}}\} \times$ \\
              & $\mu$   & $\{\mathbf{1e{-3}}\}\times$ \\
              & $q$ & $\{\mathbf{2}, 5, 10\}\times$\\
              & Total Steps   & $\{\mathbf{24K}\}$ \\\midrule
              FO-SGD          & Batch size      & $\{32, \mathbf{64}\} \times$ \\
              & Learning rate   & $\{1e{-3}, \mathbf{1e{-4}}, 1e{-5}\} \times$ \\
              & Total Steps   & $\{\mathbf{1K}\}$ \\
              \bottomrule
\end{tabular}}
\end{table*}


\subsection{Additional Results}\label{appendix:resultsroberta}
\begin{table*}[h!]
\centering
\caption{Experiments on RoBERTa-large (with 512 fine-tuning examples). Here partial refers to fine-tuning the last layers of the model (see Appendix \ref{appendix:experimentsetup} for details). FO refers to first-order methods. Full FT refers to full-parameter fine-tuning and Partial FT refers to partial-parameter fine-tuning.}
{\footnotesize
\begin{tabularx}{0.8\linewidth}{l l c c c}
\toprule
\textbf{Task} & \textbf{Method} & \textbf{Fine-tuning Loss} $\downarrow$ & \textbf{Test Accuracy }(\%)$\uparrow$ & \textbf{Queries} ($\times10^3$) $\downarrow$ \\ 
\midrule
 MNLI (Full FT) & MeZO  & 0.9447 & 43 & 12288 \\
 & MeZO-SVRG & \textbf{0.8125} & \textbf{49} & 12288\\
  \hdashline
 & FO-SGD  & 0.0292  & 85 & 64 \\
 \midrule
 MNLI (Partial FT)& MeZO  & 1.0729 & 42 & 12288 \\
 & MeZO-SVRG  & \textbf{0.8176} & \textbf{43} & 12288  \\
 \hdashline
 & FO-SGD  & 0.9859 & 52 & 64 \\
\midrule
 QNLI (Full FT) & MeZO  & 0.5845 & 59 & 12288 \\
 & MeZO-SVRG  & \textbf{0.2750} & \textbf{80} & 12288 \\
 \hdashline
 & FO-SGD  & 0.01426 & 89 & 64 \\
 \midrule
  QNLI (Partial FT) & MeZO  & 0.6885 & 50 & 12288 \\
 & MeZO-SVRG  & \textbf{0.4557} & \textbf{67} & 12288 \\
 \hdashline
 & FO-SGD  & 0.5974 & 72 & 64 \\
\midrule
SST-2 (Full FT) & MeZO & 0.69155 & 56 & 12288 \\
 & MeZO-SVRG  & \textbf{0.1336} & \textbf{84} & 12288 \\
 \hdashline
 & FO-SGD  & 0.1086 & 96 & 64 \\
 \midrule
 SST-2 (Partial FT)& MeZO & 0.6837 & 54 & 12288 \\
 & MeZO-SVRG & \textbf{0.5896} & \textbf{72} & 12288  \\
 \hdashline
 & FO-SGD  & 0.5786 & 89 & 64 \\
\midrule
 CoLA (Full FT) & MeZO  & 0.5062 & 68 & 12288 \\
 
 & MeZO-SVRG  & \textbf{0.0644} & \textbf{79} & 12288 \\

 \hdashline
 & FO-SGD  & 0.0099 & 85 & 64 \\
 \midrule
  CoLA (Partial FT) & MeZO & 0.5868 & 65 & 12288 \\
  & MeZO-SVRG  & \textbf{0.3538} & \textbf{79} & 12288 \\
 \hdashline
 & FO-SGD  & 0.4075 & 84 & 64 \\
\bottomrule
\end{tabularx}}

\label{tab:robertaresultslong}
\end{table*}

\begin{table*}[h!]
\centering
\caption{Experiments on RoBERTa-large (with 512 fine-tuning examples) in the prompted setting. Here partial refers to fine-tuning the last layers of the model (see Appendix \ref{appendix:experimentsetup} for details). FO refers to first-order methods. Full FT refers to full-parameter fine-tuning and Partial FT refers to partial-parameter fine-tuning.}
{\footnotesize
\begin{tabularx}{0.95\linewidth}{l l c c c}
\toprule
\textbf{Task} & \textbf{Method} & \textbf{Fine-tuning Loss} $\downarrow$ & \textbf{Test Accuracy }(\%)$\uparrow$ & \textbf{Queries} ($\times10^3$) $\downarrow$ \\ 
\midrule
 MNLI with Prompt (Full FT) & MeZO  &  0.0076 & 73 & 12288 \\
 & MeZO-SVRG & \textbf{0.0058} & \textbf{75} & 12288\\
  \hdashline
 & FO-SGD  & 0.0036 & 96 & 64 \\
 \midrule
 MNLI with Prompt (Partial FT)& MeZO  & 0.4614 & 65 & 12288 \\
 & MeZO-SVRG  & \textbf{0.3177} & 65 & 12288  \\
 \hdashline
 & FO-SGD  & 0.3676 & 81 & 64 \\
\midrule
SST-2 With Prompt (Full FT) & MeZO & \textbf{0.2959} & \textbf{93} & 12288 \\
 & MeZO-SVRG  & 0.3063 & 92 & 12288 \\
 \hdashline
 & FO-SGD  & 0.1578 & 93 & 64 \\
 \midrule
 SST-2 with Prompt (Partial FT)& MeZO & \textbf{0.3280} & 89 & 12288 \\
 & MeZO-SVRG & 0.3393 & 89 & 12288  \\
 \hdashline
 & FO-SGD  & 0.2981 & 90 & 64 \\
\bottomrule
\end{tabularx}}

\label{tab:robertaresultslongprompt}
\end{table*}

\newpage
\section{Additional Results for fine-tuning Autoregressive Models}\label{appendix:autoregressive}
\subsection{Hyperparameter Selection}\label{appendix:hyperparameterautoregressive}
Table \ref{tab:hyperparametersautoregressive} presents the hyperparameter grid searched over for the experiments on autoregressive models. The hyperparameter search was conducted by fine-tuning the models on the MNLI \citep{williams2018broadmnli} dataset for $100$ steps and selecting the best configuration. This selected configuration was used in extended fine-tuning sessions across all considered tasks. For our final results, MeZO-SVRG was run for $8$K steps and MeZO was run for $32$K steps.
\begin{table*}[h]
\centering
\caption{The hyperparameter grid optimized over for the GPT2 \citep{radford2019languagegpt2} and OPT-2.7B \citep{zhang2022opt} experiments. In the case of MeZO-SVRG we use the learning rate schedule proposed in Algorithm \ref{alg:lrschedule}. The bold values indicate the configuration used to generate the final results for both models.}
\label{tab:hyperparametersautoregressive}{\footnotesize
\begin{tabular}{@{}lll@{}}
\toprule
\textbf{Algorithm}    & \textbf{Hyperparameters} & \textbf{Values}        \\ \midrule
MeZO          & Batch size      & $\{32, \mathbf{64}\} \times$ \\
              & Learning rate   & $\{1e{-6}, \mathbf{5e{-6}}, 1e{-7}\} \times$ \\
              & $\mu$   & $\{\mathbf{1e{-3}}\}\times$ \\
              & Total Steps   & $\{\mathbf{32K}\}$ \\\midrule
MeZO-SVRG          & Batch size      & $\{32, \mathbf{64}\} \times$ \\
              & Learning rate ($\eta_1$)   & $\{1e{-4}, \mathbf{5e{-5}}, 1e{-5}\} \times$ \\
              & Learning rate ($\eta_2$)   & $\{\mathbf{1e{-6}}\} \times$ \\
              & $\mu$   & $\{\mathbf{1e{-3}}\}\times$ \\
              & $q$ & $\{\mathbf{2}, 5, 10\}\times$\\
              & Total Steps   & $\{\mathbf{8K}\}$ \\\midrule
              FO-SGD          & Batch size      & $\{8, \mathbf{16}\} \times$ \\
              & Learning rate   & $\{\mathbf{1e{-4}},1e{-5}\} \times$ \\
              & Total Steps   & $\{\mathbf{500}\}$ \\
              \bottomrule
\end{tabular}}
\end{table*}

\subsection{Convergence Performance}\label{appendix:convergencegpt2}
We fine-tune GPT2 \citep{radford2019languagegpt2} and OPT-2.7B \citep{zhang2022opt} on the QNLI \citep{wang-etal-2018-glue} dataset. In Figures \ref{fig:convergencegptquery} and \ref{fig:convergenceoptquery}, we show the improved convergence performance of MeZO-SVRG over MeZO. For both models, MeZO-SVRG is able to significantly reduce the convergence gap compared to the FO-SGD baseline. Figures  \ref{fig:accuracygpttime} and \ref{fig:accuracyopttime} show the evolution of the test accuracy over time. As with the experiments on masked models,  MeZO-SVRG achieves a significant improvement over MeZO in test performance. 

\begin{figure*}[h]
    \centering
    \begin{minipage}{0.49\columnwidth}
    \centering
        \includegraphics[width=0.8\linewidth]{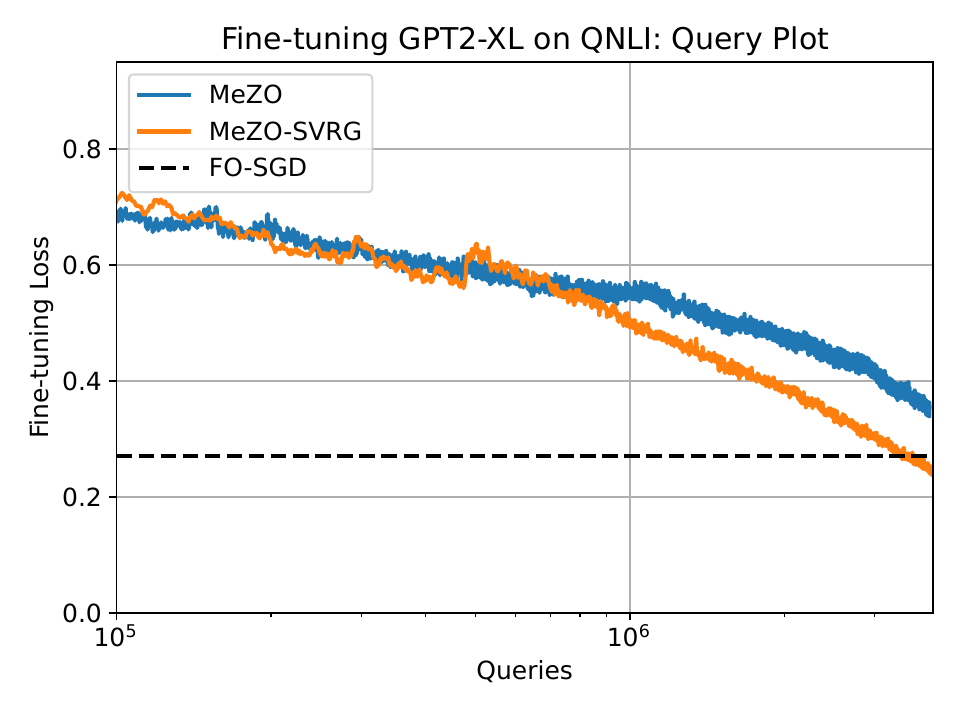}
        \subcaption{}
        \label{fig:convergencegptquery}
    \end{minipage}
    \hfill
    \begin{minipage}{0.49\columnwidth}
    \centering
        \includegraphics[width=0.8\linewidth]{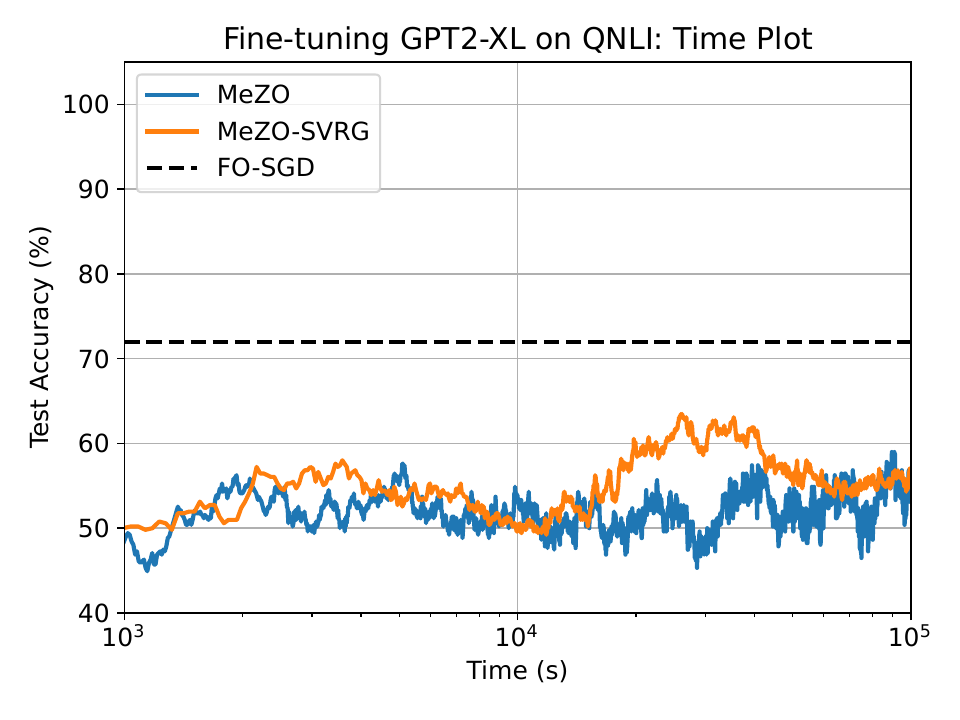}
        \subcaption{}
        \label{fig:accuracygpttime}
    \end{minipage}
    \caption{Convergence performance of MeZO-SVRG, MeZO and FO-SGD when fine-tuning GPT2 \citep{radford2019languagegpt2} on the QNLI \citep{wang-etal-2018-glue} dataset. The dashed line serves as a reference to the training loss achieved by FO-SGD. MeZO-SVRG is able to surpass the fine-tuning loss obtained by FO-SGD. It also improves on the test accuracy attained by MeZO.}
\end{figure*}

\begin{figure*}[h]
    \centering
    \begin{minipage}{0.49\columnwidth}
    \centering
        \includegraphics[width=0.8\linewidth]{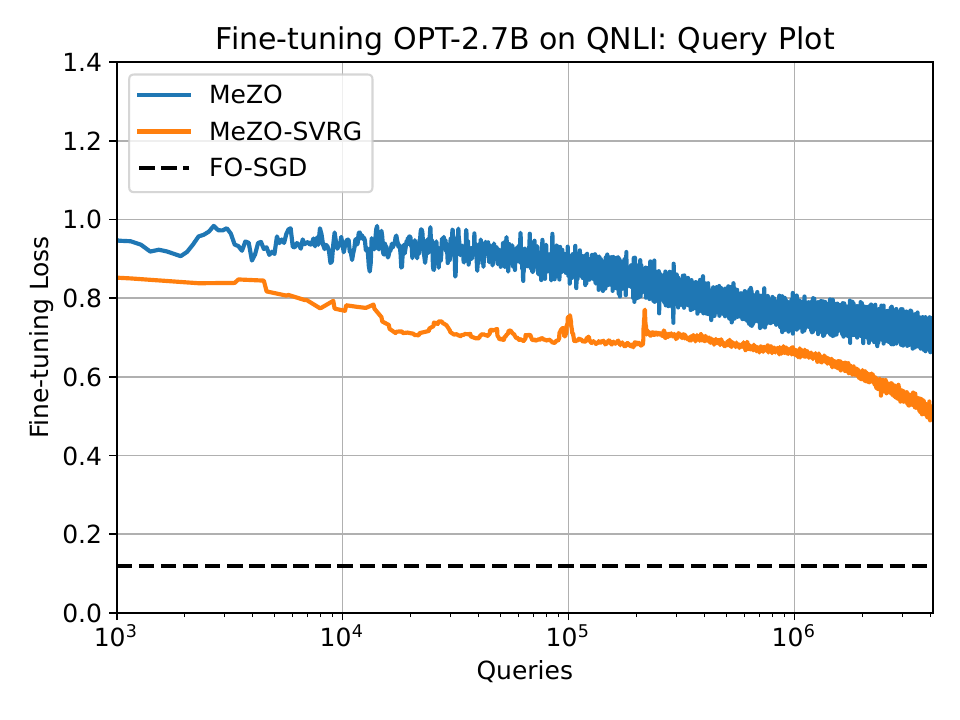}
        \subcaption{}
        \label{fig:convergenceoptquery}
    \end{minipage}
    \hfill
    \begin{minipage}{0.49\columnwidth}
    \centering
        \includegraphics[width=0.8\linewidth]{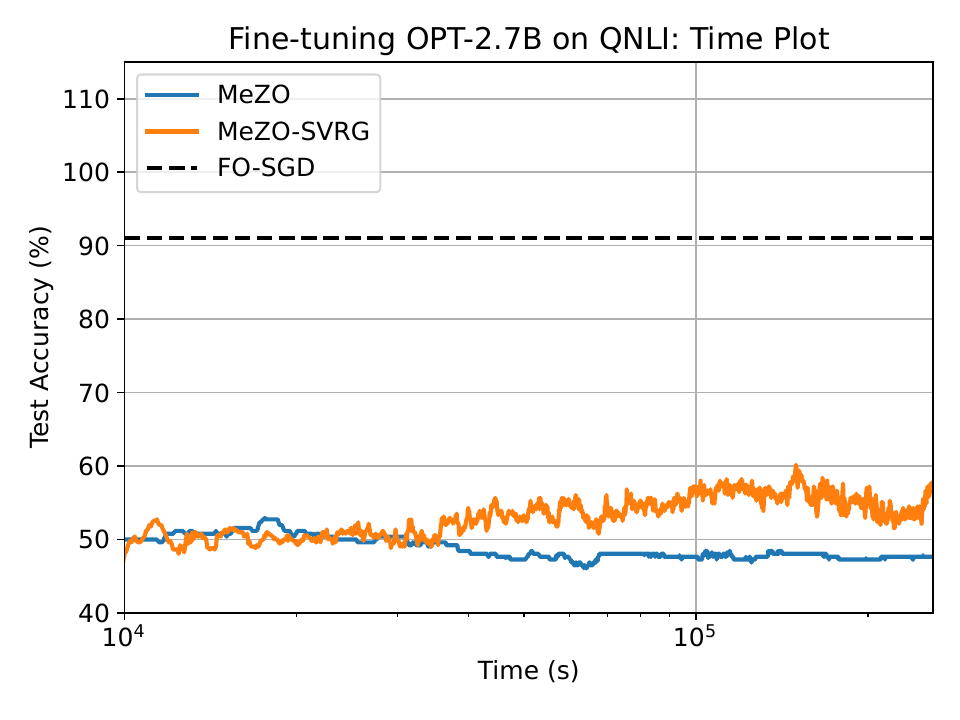}
        \subcaption{}
        \label{fig:accuracyopttime}
    \end{minipage}
    \caption{Performance of MeZO-SVRG, MeZO and FO-SGD when fine-tuning OPT-2.7B \citep{zhang2022opt} on the QNLI \citep{wang-etal-2018-glue} dataset. The dashed line serves as a reference to the training loss/test accuracy achieved by FO-SGD. MeZO-SVRG is able to reduce the convergence gap to FO-SGD compared to MeZO and improve on the test accuracy.}
\end{figure*}

\subsection{Additional Results}\label{appendix:resultsautoregressive}
Tables \ref{tab:gpt2resultslong} and \ref{tab:optresultslong} present extended results on the fine-tuning tasks for GPT2 \citep{radford2019languagegpt2} and OPT-2.7B \citep{zhang2022opt}.
\begin{table*}[h]
\centering
\caption{Experiments on GPT2 (with 512 fine-tuning examples). FO refers to first-order methods. This table summarizes results for full-parameter fine-tuning.}
{\footnotesize
\begin{tabularx}{0.8\linewidth}{l l c c c}
\toprule
\textbf{Task} & \textbf{Method} & \textbf{Fine-tuning Loss} $\downarrow$ & \textbf{Test Accuracy} (\%)$\uparrow$ & \textbf{Queries} ($\times10^3$) $\downarrow$ \\ 
\midrule
 MNLI (Full FT)& MeZO  & 0.6526 & 41 & 4096 \\
 & MeZO-SVRG & \textbf{0.4116} & \textbf{53} & 4096 \\
  \hdashline
 & FO-SGD  & 0.5924  & 69 & 8 \\
\midrule
QNLI (Full FT)
 & MeZO & 0.3351 & 58 & 4096 \\
 & MeZO-SVRG & \textbf{0.2372} & \textbf{63} & 4096 \\
 \hdashline
 & FO-SGD & 0.2799 & 72 & 8 \\
\midrule
 SST-2 (Full FT)& MeZO  & 0.3240 & 59 & 4096 \\
  & MeZO-SVRG  & \textbf{0.2024} & \textbf{65} & 4096 \\
 \hdashline
 & FO-SGD  & 0.2343 & 72 & 8 \\
\midrule
CoLA (Full FT) & MeZO & 0.3544 & 68 & 4096  \\
 & MeZO-SVRG & \textbf{0.2455} & \textbf{69} & 4096 \\
 \hdashline
 & FO-SGD & 0.3855 & 78 & 8 \\
\bottomrule
\end{tabularx}}

\label{tab:gpt2resultslong}
\end{table*}

\begin{table*}[!ht]
\centering
\caption{Experiments on OPT-2.7B (with 512 fine-tuning examples). FO refers to first-order methods. This table summarizes results for full-parameter fine-tuning.}
{\footnotesize
\begin{tabularx}{0.8\linewidth}{l l c c c }
\toprule
\textbf{Task} & \textbf{Method} & \textbf{Fine-tuning Loss} $\downarrow$ & \textbf{Test Accuracy} (\%)$\uparrow$ & \textbf{Queries} ($\times10^3$) $\downarrow$  \\ 
\midrule
 MNLI (Full FT) & MeZO  & 1.0875 & 42 & 4096 \\
 & MeZO-SVRG & \textbf{0.8159} & \textbf{52} & 4096  \\
  \hdashline
 & FO-SGD  &  0.3305 & 78 & 8 \\
\midrule
 QNLI (Full FT) & MeZO  & 0.7026 & 53 & 4096 \\
 & MeZO-SVRG & \textbf{0.4634} & \textbf{60} &  4096 \\
  \hdashline
 & FO-SGD  &  0.1222 & 91 & 8 \\
\midrule
 SST-2 (Full FT) & MeZO  & 0.6530 & 61 & 4096 \\
 & MeZO-SVRG & \textbf{0.5501} & \textbf{65} &  4096 \\
  \hdashline
 & FO-SGD  &  0.0167 & 98 & 8 \\
\midrule
 CoLA (Full FT) & MeZO  & 0.5823 & 62 & 4096 \\
 & MeZO-SVRG & \textbf{0.5335} & \textbf{67} & 4096  \\
  \hdashline
 & FO-SGD  &  0.1724 & 94 & 8 \\
\bottomrule
\end{tabularx}}

\label{tab:optresultslong}
\end{table*}

\newpage
\section{Memory Usage and Computation Time}
\subsection{Memory Profiling}\label{appendix:memory}
We performed memory profiling experiments without any advanced memory-saving options such as lowering bit precision \citep{dettmers20228bit} or gradient check-pointing \citep{chen2016training}. We used full (f32) floating-point precision. 

In the first experiment, we measured the memory requirement needed to run the different methods on full-parameter fine-tuning tasks. The MNLI \citep{williams2018broadmnli} dataset was used to fine-tune autoregressive models GPT2 \citep{radford2019languagegpt2}, OPT-2.7B, OPT-6.7B \citep{zhang2022opt}. We set the input sequence length to the maximum context length for each model, i.e. 1024 for GPT2 and 2048 for the OPT models. The batch size was set to 1. Figure \ref{fig:memoryplot} shows the peak memory consumption in GB as reported by the \texttt{nvidia-smi} command. The peak memory consumption was obtained after executing the methods for at least 100 steps. Table \ref{tab:memory} presents the largest GPT/OPT model that can be fit for each method under the aforementioned settings on single Nvidia A100 40GB and H100 80GB GPUs.

In the second experiment, we measured how the memory usage for the different methods scales with increasing batch size. We fine-tuned RoBERTa-large \citep{robertalarge} on the MNLI \citep{williams2018broadmnli} dataset. The input sequence length was set to a constant 128 and we varied the batch size $\{16, 32, 64\}$. The memory consumption was again measured using the \texttt{nvidia-smi} command and measurements were taken after running the methods for at least 100 steps. Table \ref{tab:memory} summarizes the results. 

We finally also measured how the memory usage varies for the considered algorithms when using a fixed batch size (64) and changing the context length of the input. We used a similar setting to the second experiment: fine-tuning RoBERTa-large \citep{robertalarge} on the MNLI \citep{williams2018broadmnli} dataset. The input context length was varied $\{128, 256, 512\}$ and the memory consumption was measured using the \texttt{nvidia-smi} command. Table \ref{tab:memory} reports the results.

We replicated all experiments in the half-precision (BF16) setting; the results are given in Table \ref{tab:memoryhalf}.

\subsection{Computation Time}\label{appendix:wallclocktime}
We compared the speed of MeZO-SVRG and MeZO \citep{malladi2023mezo} by measuring the time taken by each method to achieve the test performance attained by MeZO. These measurements are based on fine-tuning GPT2 \citep{radford2019languagegpt2} and OPT-2.7B \citep{zhang2022opt} on all considered datasets. Table \ref{tab:gpuhrs} summarizes the results.

\newpage
\section{Half-Precision Experiments}\label{appendix:hprecision}
In the section, we run preliminary experiments to evaluate the considered fine-tuning algorithms on the half-precision (BF16) setting.

\subsection{Half-Precision Experiments on DistilBert}
The hyperparameter grid that was optimized over for the DistilBert experiments in the half-precision setting is presented in Table \ref{tab:hyperparametersdistilberthp}. As each iteration under the half-precision setting is faster than under the full-precision setting, we run experiments for longer. Specifically, we run MeZO-SVRG for 80K steps, MeZO for 400K steps and FO-SGD for 2K steps. The results are summarized in Table \ref{tab:distilbertresultslonghp}.

\begin{table*}[h]
\centering
\caption{The hyperparameter grid optimized over for the half-precision DistilBert \citep{sanh2020distilbert} experiments. In the case of MeZO-SVRG we use the learning rate schedule proposed in Algorithm \ref{alg:lrschedule}. The bold values indicate the configuration used to generate the final results.}
\label{tab:hyperparametersdistilberthp}{\footnotesize
\begin{tabular}{@{}lll@{}}
\toprule
\textbf{Algorithm}    & \textbf{Hyperparameters} & \textbf{Values}        \\ \midrule
MeZO          & Batch size      & $\{32, \textbf{64}\} \times$ \\
              & Learning rate   & $\{1e{-4},\mathbf{1e{-5}}, 1e{-6}\} \times$ \\
              & $\mu$   & $\{\mathbf{1e{-2}}\}\times$ \\
              & Total Steps   & $\{\mathbf{400K}\}$ \\\midrule
MeZO-SVRG          & Batch size      & $\{32, \mathbf{64}\} \times$ \\
              & Learning rate ($\eta_1$)   & $\{\mathbf{1e{-3}}, 1e{-4}\} \times$ \\
              & Learning rate ($\eta_2$)   & $\{\mathbf{1e{-5}}, 1e{-6}\} \times$ \\
              & $\mu$   & $\{\mathbf{1e{-2}}\}\times$ \\
              & $q$ & $\{\mathbf{2}, 5\}\times$\\
              & Total Steps   & $\{\mathbf{80K}\}$ \\\midrule
              FO-SGD          & Batch size      & $\{32, \mathbf{64}\} \times$ \\
              & Learning rate   & $\{\mathbf{1e{-2}}, 1e{-3}, 1e{-4}\} \times$ \\
              & Total Steps   & $\{\mathbf{2K}\}$ \\
              \bottomrule
\end{tabular}}
\end{table*}

\begin{table*}[h!]
\centering
\caption{Half-precision experiments on DistilBERT (with 512 fine-tuning examples). FO refers to first-order methods. Partial FT refers to partial-parameter fine-tuning (see Appendix \ref{appendix:experimentsetup} for details).}
{\footnotesize
\begin{tabularx}{0.8\linewidth}{l l c c c}
\toprule
\textbf{Task} & \textbf{Method} & \textbf{Fine-tuning Loss} $\downarrow$ & \textbf{Test Accuracy} (\%)$\uparrow$ & \textbf{Queries} ($\times10^3$) $\downarrow$ \\ 
\midrule
 MNLI (Partial FT)& MeZO  & 1.0892 & 43 & 51200\\
 & MeZO-SVRG  & \textbf{0.8746} & \textbf{45} & 51200 \\
  \hdashline
 & FO-SGD  & 0.3508 & 51 & 128 \\
 \midrule
 QNLI (Partial FT)& MeZO  & 0.6904 & 60 & 51200 \\
 & MeZO-SVRG  & \textbf{0.5416} & \textbf{64} & 51200 \\
 \hdashline
 & FO-SGD  & 0.2998 & 66 & 128 \\
 \midrule
SST-2 (Partial FT)& MeZO & 0.6889 & 61 & 51200 \\
 & MeZO-SVRG & \textbf{0.3887} & \textbf{79} & 51200 \\
 \hdashline
 & FO-SGD & 0.0555 & 82 & 128\\
\midrule
 CoLA (Partial FT)& MeZO  & 0.6420 & 66 & 51200 \\
 & MeZO-SVRG  & \textbf{0.6170} & \textbf{71} & 51200 \\

 \hdashline
 & FO-SGD  & 0.4218 & 70 & 128\\
\bottomrule
\end{tabularx}}

\label{tab:distilbertresultslonghp}
\end{table*}

\subsection{Half-Precision Experiments on RoBERTa-large}
The hyperparameter grid that was optimized over for the DistilBert experiments in the half-precision setting is presented in Table \ref{tab:hyperparametersrobertalargehp}. As each iteration under the half-precision setting is faster than under the full-precision setting, we run experiments for longer. Specifically, we run MeZO-SVRG for 40K steps, MeZO for 200K steps and FO-SGD for 1K steps. The results are summarized in Table \ref{tab:robertaresultslonghp}.

\begin{table*}[h]
\centering
\caption{The hyperparameter grid optimized over for the half-precision RoBERTa-large \citep{robertalarge} experiments. In the case of MeZO-SVRG we use the learning rate schedule proposed in Algorithm \ref{alg:lrschedule}. The bold values indicate the configuration used to generate the final results.}
\label{tab:hyperparametersrobertalargehp}{\footnotesize
\begin{tabular}{@{}lll@{}}
\toprule
\textbf{Algorithm}    & \textbf{Hyperparameters} & \textbf{Values}        \\ \midrule
MeZO          & Batch size      & $\{\textbf{64}\} \times$ \\
              & Learning rate   & $\{1e{-4},\mathbf{1e{-5}}, 1e{-6}\} \times$ \\
              & $\mu$   & $\{\mathbf{1e{-3}}\}\times$ \\
              & Total Steps   & $\{\mathbf{200K}\}$ \\\midrule
MeZO-SVRG          & Batch size      & $\{\mathbf{64}\} \times$ \\
              & Learning rate ($\eta_1$)   & $\{\mathbf{1e{-4}}, 1e{-5}\} \times$ \\
              & Learning rate ($\eta_2$)   & $\{\mathbf{1e{-5}}, 1e{-6}\} \times$ \\
              & $\mu$   & $\{\mathbf{1e{-3}}\}\times$ \\
              & $q$ & $\{\mathbf{2}, 5\}\times$\\
              & Total Steps   & $\{\mathbf{40K}\}$ \\\midrule
              FO-SGD          & Batch size      & $\{\mathbf{64}\} \times$ \\
              & Learning rate   & $\{\mathbf{1e{-2}}, 1e{-3}, 1e{-4}\} \times$ \\
              & Total Steps   & $\{\mathbf{1K}\}$ \\
              \bottomrule
\end{tabular}}
\end{table*}

\begin{table*}[h!]
\centering
\caption{Half-precision experiments on RoBERTa-large (with 512 fine-tuning examples). FO refers to first-order methods. Partial FT refers to partial-parameter fine-tuning (see Appendix \ref{appendix:experimentsetup} for details).}
{\footnotesize
\begin{tabularx}{0.8\linewidth}{l l c c c}
\toprule
\textbf{Task} & \textbf{Method} & \textbf{Fine-tuning Loss} $\downarrow$ & \textbf{Test Accuracy} (\%)$\uparrow$ & \textbf{Queries} ($\times10^3$) $\downarrow$ \\ 
\midrule
 MNLI (Partial FT)& MeZO  & 1.0898 & 42 & 25600\\
 & MeZO-SVRG  & \textbf{1.0695} & \textbf{43} & 25600 \\
  \hdashline
 & FO-SGD  & 0.1820 & 55 & 64 \\
 \midrule
 QNLI (Partial FT)& MeZO  & 0.6835 & 62 & 25600 \\
 & MeZO-SVRG  & \textbf{0.6070} & \textbf{68} & 25600 \\
 \hdashline
 & FO-SGD  & 0.3112 & 67 & 64 \\
 \midrule
SST-2 (Partial FT)& MeZO & 0.6630 & 66 & 25600 \\
 & MeZO-SVRG & \textbf{0.5278} & \textbf{77} & 25600 \\
 \hdashline
 & FO-SGD & 0.1356 & 93 & 64\\
\midrule
 CoLA (Partial FT)& MeZO  & 0.6308 & 66 & 25600 \\
 & MeZO-SVRG  & \textbf{0.5781} & \textbf{69} & 25600 \\

 \hdashline
 & FO-SGD  & 0.1537 & 88 & 64\\
\bottomrule
\end{tabularx}}

\label{tab:robertaresultslonghp}
\end{table*}

\subsection{Half-Precision Experiments on OPT-6.7B}
The hyperparameter grid optimized for the OPT-6.7B experiments in the half-precision setting is detailed in Table \ref{tab:hyperparametersopt7hp}. We conducted the MeZO-SVRG experiments for 8k steps, MeZO for 24k steps, and FO-SGD for 1k steps. The outcomes of these experiments are summarized in Table \ref{tab:opt7resultslonghp}. We include the BoolQ dataset from the SuperGLUE \citep{superglue} benchmark to evaluate a more challenging fine-tuning task.

\begin{table*}[h]
\centering
\caption{The hyperparameter grid optimized over for the half-precision OPT-6.7B \citep{zhang2022opt} experiments. In the case of MeZO-SVRG we use the learning rate schedule proposed in Algorithm \ref{alg:lrschedule}. The bold values indicate the configuration used to generate the final results.}
\label{tab:hyperparametersopt7hp}{\footnotesize
\begin{tabular}{@{}lll@{}}
\toprule
\textbf{Algorithm}    & \textbf{Hyperparameters} & \textbf{Values}        \\ \midrule
MeZO          & Batch size      & $\{\textbf{128}\} \times$ \\
              & Learning rate   & $\{1e{-5}, \mathbf{1e{-6}}\} \times$ \\
              & $\mu$   & $\{\mathbf{1e{-3}}\}\times$ \\
              & Total Steps   & $\{\mathbf{24K}\}$ \\\midrule
MeZO-SVRG          & Batch size      & $\{\mathbf{128}\} \times$ \\
              & Learning rate ($\eta_1$)   & $\{\mathbf{1e{-4}}, 1e{-5}\} \times$ \\
              & Learning rate ($\eta_2$)   & $\{1e{-5}, \mathbf{1e{-6}}\} \times$ \\
              & $\mu$   & $\{\mathbf{1e{-3}}\}\times$ \\
              & $q$ & $\{\mathbf{2}, 5\}\times$\\
              & Total Steps   & $\{\mathbf{8K}\}$ \\\midrule
              FO-SGD          & Batch size      & $\{\mathbf{64}\} \times$ \\
              & Learning rate   & $\{1e{-3}, \mathbf{1e{-4}}\} \times$ \\
              & Total Steps   & $\{\mathbf{1K}\}$ \\
              \bottomrule
\end{tabular}}
\end{table*}

\begin{table*}[h!]
\centering
\caption{Half-precision experiments on OPT-6.7B (with 512 fine-tuning examples). FO refers to first-order methods. Full FT refers to full-parameter fine-tuning (see Appendix \ref{appendix:experimentsetup} for details).}
{\footnotesize
\begin{tabularx}{0.8\linewidth}{l l c c c}
\toprule
\textbf{Task} & \textbf{Method} & \textbf{Fine-tuning Loss} $\downarrow$ & \textbf{Test Accuracy} (\%)$\uparrow$ & \textbf{Queries} ($\times10^3$) $\downarrow$ \\ 
\midrule
SST-2 (Full FT)& MeZO & 0.5318 & 74 & 6144 \\
 & MeZO-SVRG & \textbf{0.5278} & \textbf{77} & 6144 \\
 \hdashline
 & FO-SGD & 0.103 & 91 & 128\\
 \midrule
BoolQ (Full FT)& MeZO & 0.6259 & 65 & 6144 \\
 & MeZO-SVRG & \textbf{0.5703} & \textbf{69} & 6144 \\
 \hdashline
 & FO-SGD & 0.2872 & 84 & 128\\
\bottomrule
\end{tabularx}}

\label{tab:opt7resultslonghp}
\end{table*}

\newpage
\subsection{Memory Profiling with Half-Precision}
\begin{table*}
    \centering{\footnotesize
        \begin{tabular}{l cccc cc}
            \toprule
              &\multicolumn{1}{c}{} & \multicolumn{5}{c}{\textbf{Memory Usage in GB for RoBERTa-large}} \\
              &\multicolumn{1}{c}{\textbf{Largest OPT/GPT that can fit}} & \multicolumn{3}{c}{Fixed context length (cl=128)} & \multicolumn{2}{c}{Fixed batch size (bs=64)} \\
            \cmidrule(lr){2-2} \cmidrule(lr){3-5} \cmidrule(lr){6-7}  
            \textbf{Method} & A100 (40GB)  & $\textrm{bs}=16$& $\textrm{bs}=32$& $\textrm{bs}=64$&
            $\textrm{cl}=256$&
            $\textrm{cl}=512$\\
            \midrule
            MeZO   & 13B & 1.03 & 1.13 & 1.25 & 1.39 & 2.66\\
            \hdashline
            MeZO-SVRG  & \textbf{6.7B}  & \textbf{2.10 (39\%)} & \textbf{2.11 (66\%)} & \textbf{2.12 (79\%)} & \textbf{2.27 (90\%)} & \textbf{3.66} \\
             FO-SGD  & 2.7B  & 3.42 & 5.81 & 9.83 & 21.87 & OOM\\
             FO-Adam & 1.3B & 5.85 & 8.07 & 12.16 & 24.29 & OOM\\
            \bottomrule
        \end{tabular}}
    \caption{Memory profiling with half-precision. Shows the largest AR models that can fit on single 40 GPUs. We also measure the memory usage under different batch sizes (bs) and context lengths (cl) when fine-tuning RoBERTa-large.  Percentages indicate the memory savings with respect to FO-SGD.}
    \label{tab:memoryhalf}
\end{table*}

\end{document}